\newtheorem{theorem}{Theorem}
\newtheorem{proposition}{Proposition}
\newtheorem{lemma}{Lemma}
\newtheorem{fact}{Fact}
\theoremstyle{definition}
\newtheorem{definition}{Definition}
\newtheorem{example}{Example}
\newtheorem{question}{Question}
\DeclareMathOperator*{\argmin}{arg\,min}
\begin{document}

\title[Learnability with computable learners]{On characterizations of \\ learnability with computable learners}
\author[Sterkenburg]{Tom F.\ Sterkenburg}
\date{June 15, 2022.}
\thanks{This is the final version, as presented at the Conference on Learning Theory (COLT) 2022 and published in \emph{Proceedings of Machine Learning Research} 178: 3365--3379 (\href{https://proceedings.mlr.press/v178/sterkenburg22a.html}{link}). For helpful discussion and feedback thanks go to Matthias Caro, Gemma De Las Cuevas, Benedict Eastaugh, Peter Gr\"unwald, Wouter Koolen, Hannes Leitgeb, and the anonymous COLT referees. This research was supported by the Deutsche
Forschungsgemeinschaft (DFG, German Research Foundation)—Projektnummer 437206810, \emph{Die Epistemologie der Statistischen Lerntheorie}. Part of this research was done while I was visiting  the Machine Learning group of the CWI Amsterdam.}
\address{Munich Center for Mathematical Philosophy, LMU Munich}
\email{tom.sterkenburg@lmu.de}

\maketitle

\begin{abstract}%
We study computable PAC (CPAC) learning as introduced by \citet{aablu20alt}. First, we consider the main open question of finding characterizations of proper and improper CPAC learning. We give a characterization of a closely related notion of \emph{strong} CPAC learning, and provide a negative answer to the COLT open problem posed by \cite{aablu21colt} whether all decidably representable VC classes are improperly CPAC learnable. Second, we consider undecidability of (computable) PAC learnability. We give a simple general argument to exhibit such undecidability, and initiate a study of the arithmetical complexity of learnability. We briefly discuss the relation to the undecidability result of \citet{BHMSY19nmi}, that motivated the work of Agarwal et al.
\end{abstract}

\section{Introduction}
What changes in the theoretical analysis of learning algorithms when we impose a restriction to algorithms that are, in fact, algorithmic? 
This fundamental question led \citet{aablu20alt} to initiate a study of statistical learning theory with computable learners. The theory of probably approximately correct (PAC) learning, as presented by \citet{ShaBen14}, is founded on the Vapnik-Chervonenkis (VC) theory of uniform convergence \citeyearpar{VapChe71tpa}, that separates the statistical analysis of learning functions from computational considerations. On the other hand, PAC learning draws its name from Valiant's computational approach (\citeyear{Val84acm}; see \citealp{KeaVaz94}), that focuses on the efficiency (polynomial runtime) of learners. Agarwal et al.\ introduce a natural intermediate set-up, where it is (only) required for learners to be computable functions. They obtain several results about the ensuing notion of computable PAC (CPAC) learning and its relationship to unconstrained PAC learnability.


The fundamental theorem of PAC learning \citep{BEHW89jacm} states that (under mild measurability conditions) a class of hypotheses is PAC learnable precisely if it satisfies the combinatorial property of finite VC dimension. Moreover, a class is PAC learnable precisely if the procedure of empirical risk minimization (ERM) PAC learns it. The main lesson that Agarwal et al.\ draw from their results is that the computability requirement ``disrupts the fundamental characterization of learnability by the finite VC-dimension of a class'' \citeyearpar[p.\ 59]{aablu20alt}. However, they leave as an open question what conditions \emph{do} characterize computable PAC learnability. As the two most important questions for future research,  they ask for characterizations of proper and of improper CPAC learnability. The latter motivates the open problem announced by \citet{aablu21colt}, whether there are decidably representable PAC learnable classes that are not even improperly CPAC learnable.


In the first main part of this paper (Section \ref{sec:chars}), we make progress on these two questions. We introduce a notion of \emph{strong} CPAC (SCPAC) learnability, by adding a stipulation on the computability of the sample complexity. The motivation for this notion is that we can prove a natural characterization (that does preserve the classical characterization as neatly as possible), namely as the conjunction of finiteness of VC dimension and computability of ERM. In fact, the notions of CPAC and SCPAC learnability are so close that they may already be equivalent; we leave this as an open question.  Further, we solve the open problem of \citet{aablu21colt}. We confirm their conjecture that a particular decidably representable PAC learnable class is not even improperly CPAC learnable, implying that there is a nontrivial question of characterizing improper CPAC learnability.

An incentive for the work of Agarwal et al.\ was the result due to  \citet{BHMSY17arxiv,BHMSY19nmi} that learnability can be undecidable. Ben-David et al.\ introduce a general learning model of ``estimating the maximum'' (EMX),  and exhibit a particular EMX learnability problem that they prove to be independent of the ZFC axioms of set theory (provided ZFC is consistent). From this result they infer that ``there is no VC dimension-like parameter that generally characterizes learnability'' \citeyearpar[p.\ 44]{BHMSY19nmi}. Their analysis is that ``the source of the problem is in defining learnability as the existence of a learning function rather than the existence of a learning algorithm'' 
(ibid., p. 48). In the same vein, \citet[p.\ 48]{aablu20alt} write that ``[h]ad we required learners to be computable, there would have been a finite representation for each learner [\dots], ruling out independence of ZFC results of the type shown in \citet{BHMSY17arxiv,BHMSY19nmi}.'' 

In the second main part of this paper (Section \ref{sec:undec}), we turn to the undecidability of computable PAC learnability. 
On the basis of Rice's Theorem, we offer a simple  argument to the effect that, for \emph{any} notion of learnability in the current computable framework, and a general approach to formulating decision problems of learnability (computable families of hypothesis classes), the resulting decision problem, if not trivial (either every class is learnable or every class is not), is unsolvable. We observe that the unsolvability of a learnability decision problem directly entails that the learnability of infinitely many hypothesis classes  is independent of the ZFC axioms (provided ZFC is arithmetically sound). Further, we initiate an investigation (similar to the work of \citealp{Ber14jsl,BBFGWY21aml} for algorithmic learning theory) into \emph{how} undecidable learnability problems are: that is, into their arithmetical complexity.  In particular, we use our characterization of SCPAC learnability to show that this decision problem is $\Sigma_3$-complete. Finally (in Section \ref{sec:concl}), we briefly discuss how our observations relate to the undecidability result of Ben-David et al.

\subsection*{Related work} 
We restrict attention to the framework of \citet{aablu20alt}, where the domain set is countable and hypotheses are total computable functions (see Section \ref{sec:prel}). \citet{AADFT21arxiv} present results about computable PAC learning within a more general framework of computable analysis, 
where the domain is an arbitrary computable metric space. They also remark on the assumption of a computable sample complexity, the added ingredient in our notion of SCPAC learning. \citet{Cal15aml} already studied a computable setting where the domain is $ 2^\omega$ and hypotheses are $\Pi^0_1$ classes, 
 and established the arithmetical complexity of PAC learnability (finiteness of VC dimension) of effective hypothesis classes within this setting. Calvert further notes the relation to earlier work on the computational complexity of calculating the VC dimension of finite hypothesis classes over finite domain \citep{LinManRiv91ic,Sch99jcss}. Schaefer, citing \citet{Weh90phd}, also gives the arithmetical complexity of PAC learnability within the computable setting we study here. \citet{Car21arxiv} recently showed the undecidability of (among other models) PAC learning, constructing instances of both ``Turing undecidability'' (unsolvability of decision problem) and ``G\"odel undecidability'' (independence of axiom system). 
His constructions for the undecidability of PAC learning apply to the current computable setting, and indeed the relevant (families of) hypothesis classes are computable, but they only partly transfer to CPAC learnability (see Section \ref{ssec:undec} for more details). \citet{Ber14jsl,BBFGWY21aml} study the arithmetical complexity of learnability for the algorithmic learning theory paradigm of identification in the limit \citep{Gol67ic,JaiOshRoySha99}.

\section{Preliminaries}\label{sec:prel}


\subsection{PAC learning}

Let $\mathcal{X}=\mathbb{N}$ the domain, and $\mathcal{Y}=\{0,1\}$ the label space. A hypothesis is a function $h: \mathcal{X} \rightarrow \mathcal{Y}$. 
A \emph{sample} $S$ is a finite ordered sequence of input-label pairs, or formally, $S \in \mathcal{S} := \cup_{n\in \mathbb{N}} (\mathcal{X} \times \mathcal{Y})^n$. 
To assess hypotheses, we use the 0/1 error function. Thus the \emph{error} of $h$ on sample $S$ is given by 
\begin{align*}
L_S(h) := \frac{|\{(x,y)\in S: h(x) \neq y \}|}{|S|},
\end{align*}
and the true error or  \emph{risk} of $h$ w.r.t.\ a distribution $\mathcal{D}$ over $\mathcal{X} \times \mathcal{Y}$ is 
\begin{align*}
 L_\mathcal{D}(h) :=\mathbb{P}_{(x,y) \sim \mathcal{D}}[h(x) \neq y].
\end{align*}

\begin{definition}[PAC learnability]
A hypothesis class $\mathcal{H}$ is PAC learnable if there exists a function $m_\mathcal{H}: (0,1)^2 \rightarrow \mathbb{N}$ and a learning function $A: \mathcal{S} \rightarrow \mathcal{H}$ such that for all $\epsilon,\delta \in (0,1)$, for all $m \geq m_\mathcal{H}(\epsilon,\delta)$ and any distribution $\mathcal{D}$ over $\mathcal{X} \times \mathcal{Y}$ we have
\begin{align}\label{eq:pac}
\mathrm{Prob}_{S \sim \mathcal{D}^m}\left[L_\mathcal{D}(A(S)) \leq \min_{h \in \mathcal{H}} (L_\mathcal{D}(h))+ \epsilon \right] \geq 1-\delta.
\end{align}
\end{definition}
We also call the above \emph{agnostic} PAC learning to distinguish it from the more specific case of \emph{realizable} PAC learning, where we make the assumption that there exists $h^* \in \mathcal{H}$ with $L_\mathcal{D}(h^*)=0$. We also call the above \emph{proper} PAC learning to distinguish it from the more general case of \emph{improper} PAC learning, where we do not assume that the range of the learning function $A$ is restricted to $\mathcal{H}$. That is, $A$ may also output hypotheses that are not in $\mathcal{H}$; but condition \eqref{eq:pac}, including the comparison to the best hypothesis \emph{in} $\mathcal{H}$, does not change.
\begin{definition}
Empirical risk minimization for hypothesis class $\mathcal{H}$, write \textsc{ERM}$_\mathcal{H}$, returns for each $S \in \mathcal{S}$ a hypothesis in $\argmin_{h \in \mathcal{H}}L_S(h)$.
\end{definition}
For hypothesis class $\mathcal{H}$ and $X= \{x_1,\dots,x_m\} \subset \mathcal{X}$, the \emph{restriction of $\mathcal{H}$ to $X$} is the class $\mathcal{H}_{|X}$ of functions $f: X \rightarrow \mathcal{Y}$ such that $f(x) = h(x)$ for some $h \in \mathcal{H}$ and all $x \in \mathcal{X}$.
We say that $\mathcal{H}$ \emph{shatters} finite $X \subset \mathcal{X}$ if the restriction of $\mathcal{H}$ to $X$ contains \emph{all} functions $f: X \rightarrow \mathcal{Y}$.
\begin{definition}
The \emph{VC dimension} of hypothesis class $\mathcal{H}$, write VCdim$(\mathcal{H})$, is the maximal size of a set $X \subset \mathcal{X}$ that is shattered by $\mathcal{H}$. If $\mathcal{H}$ shatters sets of arbitarily large size, then VCdim$(\mathcal{H})=\infty$. 
\end{definition}
\begin{theorem}[Fundamental theorem of PAC learning, \citealp{BEHW89jacm}]\label{theorem:funpac}
A hypothesis class $\mathcal{H}$ is PAC learnable if and only if ERM$_{\mathcal{H}}$ PAC learns $\mathcal{H}$ if and only if VCdim$(\mathcal{H})<\infty$.
\end{theorem}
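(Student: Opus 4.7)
The plan is to establish a cycle of three implications. That ERM$_{\mathcal{H}}$ PAC learning $\mathcal{H}$ implies $\mathcal{H}$ is PAC learnable is immediate from the definitions, since ERM$_\mathcal{H}$ is a particular learning function. So the real content lies in the two remaining directions.

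For the implication PAC learnable $\Rightarrow$ VCdim$(\mathcal{H}) < \infty$, I would proceed by contrapositive via a no-free-lunch argument. Suppose VCdim$(\mathcal{H}) = \infty$. Then for every sample size $m$, there is a shattered set $C \subset \mathcal{X}$ of size $2m$; by shattering, every labeling of $C$ is realized by some $h \in \mathcal{H}$. Restrict attention to distributions supported on $C$: pick a labeling $f: C \to \{0,1\}$ uniformly at random, and let $\mathcal{D}_f$ be uniform on $C$ with labels determined by $f$, so $\min_{h \in \mathcal{H}} L_{\mathcal{D}_f}(h) = 0$. A standard averaging argument (conditioning on which half of $C$ is unseen in the sample) shows that for any fixed learner $A$, the expectation over the random $f$ and a random $m$-sample $S \sim \mathcal{D}_f^m$ of $L_{\mathcal{D}_f}(A(S))$ is at least $1/4$. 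Markov's inequality then yields a specific $f$ on which $A$ fails the PAC condition \eqref{eq:pac} for, e.g., $\epsilon = \delta = 1/8$. Since the sample-complexity function $m_\mathcal{H}$ should accommodate \emph{every} $m$, this contradicts PAC learnability.

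For the main implication VCdim$(\mathcal{H}) < \infty \Rightarrow \textrm{ERM}_\mathcal{H}$ PAC learns $\mathcal{H}$, the route is via \emph{uniform convergence}. Writing $d := \mathrm{VCdim}(\mathcal{H})$, the goal is to show that for any distribution $\mathcal{D}$ over $\mathcal{X}\times\mathcal{Y}$, with probability at least $1-\delta$ over $S \sim \mathcal{D}^m$ one has $\sup_{h \in \mathcal{H}} |L_S(h) - L_\mathcal{D}(h)| \leq \epsilon/2$, provided $m$ exceeds some $m_\mathcal{H}(\epsilon,\delta)$ that is polynomial in $d$, $1/\epsilon$, and $\log(1/\delta)$. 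Any ERM output then has risk within $\epsilon$ of $\min_{h \in \mathcal{H}} L_\mathcal{D}(h)$ on the same high-probability event, yielding agnostic PAC learning by $\textrm{ERM}_\mathcal{H}$.

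The proof of uniform convergence itself breaks into three steps. First, a symmetrization (ghost sample) argument replaces the probability that $\sup_{h \in \mathcal{H}}|L_S(h) - L_\mathcal{D}(h)|$ is large by (twice) the probability that the empirical errors on two independent $m$-samples $S, S'$ differ substantially for some $h$; this reduces a statement about the unknown distribution $\mathcal{D}$ to one about the empirical behavior of $\mathcal{H}$ on $2m$ sample points. Second, the Sauer–Shelah lemma bounds the number of distinct labelings that $\mathcal{H}$ induces on any fixed set of $2m$ points by $O(m^d)$, rather than the trivial $2^{2m}$. Third, a union bound over these polynomially many effective hypotheses combined with a Hoeffding-type concentration inequality (applied conditionally on the $2m$ points, using a random swap of corresponding coordinates between $S$ and $S'$) drives the tail probability to $\delta$ once $m$ is large enough. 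The main obstacle is step one: the supremum runs over an a priori uncountable class, so one must justify measurability and correctly handle the passage from $\mathcal{H}$ to the finite set of its restrictions to the sample—the \emph{mild measurability conditions} alluded to in the theorem statement are exactly what make this step go through, and are automatic in the present setting where $\mathcal{X} = \mathbb{N}$ is countable.
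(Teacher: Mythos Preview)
The paper does not prove this theorem at all: it is stated as a classical result with citation to \citet{BEHW89jacm} and is used as background for the subsequent development. Your outline is the standard textbook proof (essentially that of \citealp{ShaBen14}, which the paper also references for the Sauer bound and the uniform-convergence-to-ERM step in the proof of Proposition~\ref{propo:efferms}), and it is correct at the level of detail you give. There is nothing to compare against in the paper itself.
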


%
%

\subsection{Computable PAC learning}

 We use the following computability-theoretic notation (see, e.g., \citealp{Soa16}). Let $\{ \phi_i \}_{i \in \mathbb{N}}$ be a standard enumeration of all partial computable (p.c.)\ functions. We write $\phi_i(x)\downarrow=y$ to denote that $\phi_i$ halts on input $x$ and returns $y$, while $\phi_i(x)\uparrow$ denotes that $\phi_i$ does not halt on $x$. We write $\phi_{i,s}(x)=y$ if $\phi_i$ outputs $y$ on input $x$ within $s$ computation steps; by convention, $i,x,y < s$. We similarly write $\phi_{i,s}(x)\downarrow$ if $\phi_{i}$ has halted and produced an output on $x$ by $s$ or $\phi_{i,s}(x)\uparrow$ if it has not.
 

In computable PAC (CPAC) learning, we work with \emph{computable} hypotheses, total computable functions $h: \mathcal{X} \rightarrow \mathcal{Y}$. Moreover, learners must be actual learning \emph{algorithms}, total computable functions from samples to computable hypotheses.

\begin{definition}[CPAC learnability, \citealp{aablu20alt}]\label{def:cpac}
A hypothesis class $\mathcal{H}$ is  CPAC learnable if there exists a \emph{total computable} $A: \mathcal{S} \rightarrow \mathcal{H}$ that PAC learns $\mathcal{H}$.
\end{definition}
We again also use the terms \emph{agnostic} and \emph{proper} to distinguish this notion from the more specific case of realizable CPAC learning and the more general case of improper CPAC learning.  

The following fact is an immediate consequence of Theorem \ref{theorem:funpac} and Definition \ref{def:cpac}. 
\begin{fact}\label{fact:efferm}
If VCdim$(\mathcal{H})<\infty$ and \textsc{ERM}$_{\mathcal{H}}$ is computably implementable, i.e., there is a total computable function that computes a version of \textsc{ERM}$_{\mathcal{H}}$, then $\mathcal{H}$ is CPAC learnable.  
\end{fact}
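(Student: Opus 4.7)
The plan is to chain Theorem~\ref{theorem:funpac} with Definition~\ref{def:cpac} directly; there is essentially no content beyond unpacking definitions. First I would invoke the fundamental theorem: since $\mathrm{VCdim}(\mathcal{H})<\infty$, it supplies a sample complexity function $m_\mathcal{H}: (0,1)^2 \to \mathbb{N}$ witnessing that \emph{any} implementation of $\textsc{ERM}_\mathcal{H}$ --- that is, any function $A: \mathcal{S} \to \mathcal{H}$ with $A(S) \in \argmin_{h \in \mathcal{H}} L_S(h)$ --- satisfies the PAC condition (\ref{eq:pac}) with respect to $m_\mathcal{H}$, uniformly over distributions $\mathcal{D}$.

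Next I would invoke the assumption that $\textsc{ERM}_\mathcal{H}$ is computably implementable, which provides a total computable function $A$ realizing one such implementation. Since $A$ is total computable and its outputs lie in $\mathcal{H}$, plugging it into Definition~\ref{def:cpac} immediately yields that $\mathcal{H}$ is CPAC learnable, with $A$ itself as the witnessing learner.

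The only mildly delicate point, and what one should verify in a careful write-up, is a coding subtlety: in the CPAC framework a map ``into $\mathcal{H}$'' must be read as outputting indices of total computable functions (under the enumeration $\{\phi_i\}_{i \in \mathbb{N}}$) that happen to lie in $\mathcal{H}$. I do not expect this to be a real obstacle, since ``computably implementable'' is already understood to bundle in exactly this indexed realization. In sum, the proof is a two-step chain: finiteness of the VC dimension supplies the statistical guarantee for ERM via Theorem~\ref{theorem:funpac}, and the hypothesis of computable implementability lifts that guarantee to a total computable learner, which is precisely what Definition~\ref{def:cpac} demands.
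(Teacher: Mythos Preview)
Your proposal is correct and matches the paper's approach exactly: the paper simply states that Fact~\ref{fact:efferm} is ``an immediate consequence of Theorem~\ref{theorem:funpac} and Definition~\ref{def:cpac},'' without giving a separate proof. Your write-up is a faithful unpacking of precisely that two-step chain.
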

We further introduce a variant of CPAC learning, that we call \emph{strong} CPAC (or SCPAC) learning, where it is explicitly stipulated that the learning algorithm comes with a computable sample complexity function. We discuss the motivation for this notion in Section \ref{ssec:charprop}.
\begin{definition}[SCPAC learnability]\label{definition:scpac}
A hypothesis class $\mathcal{H}$ is SCPAC learnable if there exists a \emph{total computable}  $A: \mathcal{S} \rightarrow \mathcal{H}$ and a \emph{total computable} $m_\mathcal{H}: \mathbb{N}^2 \rightarrow \mathbb{N}$  such that for all $a,b \in \mathbb{N}$, for all $m \geq m_\mathcal{H}(a,b)$ and any distribution $\mathcal{D}$ over $\mathcal{X} \times \mathcal{Y}$, 
\begin{align}\label{eq:scpac}
\mathrm{Prob}_{S \sim \mathcal{D}^m}\left[L_\mathcal{D}(A(S)) > \min_{h \in \mathcal{H}} (L_\mathcal{D}(h))+ 1/a \right] < 1/b.
\end{align}
\end{definition}
The sufficient condition of Fact \ref{fact:efferm} is already sufficient for SCPAC learnability. 
\begin{proposition}\label{propo:efferms}
If VCdim$(\mathcal{H})<\infty$ and \textsc{ERM}$_{\mathcal{H}}$ is computably implementable, then $\mathcal{H}$ is SCPAC learnable.  
\end{proposition}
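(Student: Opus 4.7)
The plan is to read off a suitable computable $m_\mathcal{H}$ directly from the quantitative version of the fundamental theorem of PAC learning. Since we already have, via Fact \ref{fact:efferm}, a total computable learner $A$ (any computable implementation of \textsc{ERM}$_\mathcal{H}$) that PAC learns $\mathcal{H}$, the only new content in Definition \ref{definition:scpac} is the requirement that $m_\mathcal{H}$ itself be total computable.

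First, I would fix $A$ to be a total computable implementation of \textsc{ERM}$_\mathcal{H}$, which exists by hypothesis, and let $d := \mathrm{VCdim}(\mathcal{H})$, which is a fixed natural number by hypothesis. Next, I would invoke the standard quantitative (agnostic) form of Theorem \ref{theorem:funpac}: there is a universal constant $C>0$ such that \textsc{ERM}$_\mathcal{H}$ satisfies the PAC condition \eqref{eq:pac} whenever the sample size is at least
\begin{align*}
m(\epsilon,\delta) \;=\; \left\lceil C \cdot \frac{d + \log(1/\delta)}{\epsilon^{2}} \right\rceil.
\end{align*}
Setting $m_\mathcal{H}(a,b) := \lceil C \cdot a^{2}\,(d + \log b + 1)\rceil$ then yields a fixed arithmetical expression in $a$ and $b$, with the natural number $d$ hard-wired as a constant. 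This is clearly a total computable function $\mathbb{N}^2 \to \mathbb{N}$, and by construction $m_\mathcal{H}(a,b) \geq m(1/a, 1/b)$, so the PAC guarantee immediately rephrases as condition \eqref{eq:scpac} with $\epsilon = 1/a$ and $\delta = 1/b$.

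The one point that is worth flagging, rather than a real obstacle, is the distinction between \emph{having} a computable $m_\mathcal{H}$ and being able to \emph{produce} one uniformly from a description of $\mathcal{H}$. Even if $d = \mathrm{VCdim}(\mathcal{H})$ cannot be extracted effectively from an index for $\mathcal{H}$, Definition \ref{definition:scpac} only demands the existence of some total computable $m_\mathcal{H}$, and any fixed natural number is of course a computable constant; so the formula above is a bona fide total computable function for the specific class $\mathcal{H}$ at hand. This completes the reduction of SCPAC learnability to the two hypotheses of the proposition.
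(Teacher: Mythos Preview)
Your argument is correct and follows essentially the same route as the paper: both invoke the quantitative form of the fundamental theorem (via Sauer's lemma and uniform convergence) to obtain a sample-complexity bound that depends only on the finite constant $d=\mathrm{VCdim}(\mathcal{H})$, and observe that this yields a total computable $m_\mathcal{H}$. Your added remark distinguishing existence of a computable $m_\mathcal{H}$ from uniform computability in an index for $\mathcal{H}$ is a useful clarification, but not a departure from the paper's reasoning.
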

\begin{proof}
Given $\mathcal{H}$ with VCdim$(\mathcal{H})=d<\infty$, Sauer's lemma gives us a computable bound (depending only on the finite information $d$) on the sample complexity for the uniform convergence property of $\mathcal{H}$ \citep[Theorem 6.7]{ShaBen14}, which in turn gives us a computable bound on the sample complexity of \textsc{ERM}$_\mathcal{H}$ (ibid., Corollary 4.4). 
\end{proof}
\subsection{Computability of hypothesis classes}
We would also like to formulate a notion of effective computability of hypothesis classes, classes of computable hypotheses. Namely, a class of total computable functions can \emph{itself} fail to be computable, in the sense that there is no computable way of checking or even enumerating its elements.
\begin{example}[\citealp{aablu20alt}, Theorem 9]
Define for $i\in \mathbb{N}$ hypothesis $h_i$ by
\begin{align*}
h_i(x) =\begin{cases}
1 & \textrm{if } x=2i \textrm{ or }x=2i+1 \ \& \ \phi_i(i)\downarrow \\ 
0 & \textrm{otherwise}
\end{cases}
\end{align*}
and let hypothesis class $\mathcal{H}_\textrm{halt} := \{h_i\}_{i \in \mathbb{N}}$. While each \emph{individual} $h_i$ is  computable  (since given by finite information), the $h_i$ are not \emph{uniformly} computable in $i$ (or we could solve the Halting problem), meaning the members of $\mathcal{H}_\textrm{halt}$ cannot be computably enumerated. This underlies the fact that $\mathcal{H}_\textrm{halt}$ is not CPAC learnable, not even in the realizable case. Namely, by Fact  \ref{fact:efferm} it would suffice for CPAC learnability that \textsc{ERM}$_\mathcal{H_\mathrm{halt}}$ is computably implementable. For this,  in the realizable case, it would suffice that the elements of $\mathcal{H_\mathrm{halt}}$ can be enumerated \citep[Theorem 10]{aablu20alt}.
%
%
%
\end{example}
As a general approach to a notion of effective hypothesis classes, we always assume some encoding that computably corresponds the natural numbers (indices) to programs (Turing machines) for computing hypotheses, inducing some \emph{base class} $\hat{\mathcal{H}}$ of computable hypotheses. More precisely, we assume a computable \emph{decoding function} $C: \mathbb{N} \rightarrow \hat{\mathcal{H}}$, that gives a computable listing $\{ h_i \}_{i \in \mathbb{N}} = \hat{\mathcal{H}}$ by $h_i := C(i)$. Note that this base class $\hat{\mathcal{H}}$ must always be a strict subclass of the class $\mathcal{H}_\mathrm{comp}$ of \emph{all} computable hypotheses, because the computable hypotheses are the total computable (t.c.) functions, and by a standard diagonalization argument we cannot effectively enumerate or encode (programs for) all and only the t.c.\ functions.

Given such an encoding of a base class $\hat{\mathcal{H}}$, the available hypothesis classes $\mathcal{H} \subseteq \hat{\mathcal{H}}$ correspond to (and can be identified with) the subsets of $\mathbb{N}$. A computable subset of $\mathbb{N}$ then gives a computable class of (codes of) hypotheses in $\hat{\mathcal{H}}$, and a c.e.\ subset of $\mathbb{N}$ gives a computable enumeration of (codes of) a class of hypotheses in $\hat{\mathcal{H}}$. 
 The former corresponds to the notion of \emph{decidably representable} (DR) hypothesis class of \citet{aablu20alt}, and the latter to their notion of \emph{recursively enumerably representable} (RER) hypothesis class. We will adopt this terminology. 
\begin{example}
Consider the base class $\mathcal{H}_\mathrm{fin}$ of all hypotheses \emph{with finite support}: the hypotheses $h$ with $x_0$ such that $h(x)=h(x')$ for all $x,x'>x_0$ \citep[Remark 7]{aablu20alt}. Each such $h$ is given by the finite information of its corresponding $x_0$, the list of labels for $x \leq x_0$, and the constant label for $x>x_0$; and we can clearly specify an encoding of all such hypotheses that gives a computable decoding function $C: \mathbb{N} \rightarrow \mathcal{H}_\mathrm{fin}$. Examples of DR subclasses---or choices of base classes in their own right---are the class $\mathcal{H}_\mathrm{ivl}$ of \emph{interval hypotheses} ($h$ with $x_0,x_1$ such that $h(x)=1$ iff $x_0 < x < x_1$) and the class $\mathcal{H}_\mathrm{thd}$ of \emph{threshold hypotheses} ($h \in \mathcal{H}_\mathrm{ivl}$ with $x_0=0$). An example of a non-DR RER subclass is the class of threshold hypotheses with $x_1$ such that $\phi_{x_1}(x_1)\downarrow$. 
\end{example}
Not every DR hypothesis class is CPAC learnable \citep[Theorem 11]{aablu20alt}, which means there are DR classes for which ERM is not computably implementable. On the other hand, a hypothesis class does not have to be RER to be SCPAC learnable.
\begin{example}\label{ex:scpacnotrer}
Take the class $\mathcal{H}_\mathrm{ivl}$  of interval hypotheses. This class has VC dimension 2 and \textsc{ERM}$_{\mathcal{H}_\mathrm{ivl}}$ is clearly computably implementable, so it is SCPAC learnable. Now extend this class with all threshold functions $h_a$ such that $\phi_a(a)\uparrow$. The extended class $\mathcal{H}'$ is no longer RER. However, $\mathrm{VCdim}(\mathcal{H}_\mathrm{ivl})=\mathrm{VCdim}(\mathcal{H}')$ and we have that for each $S \in \mathcal{S}$, $\min_{h \in \mathcal{H}_{\mathrm{ivl}}} L_S(h) = \min_{h \in \mathcal{H}'} L_S(h)$, so that the algorithm for \textsc{ERM}$_{\mathcal{H}_{\mathrm{ivl}}}$ also implements \textsc{ERM}$_{\mathcal{H}'}$. Thus $\mathcal{H}'$ is also SCPAC learnable. 
\end{example}

%
%
%
%
%
%


\section{Towards characterizations of computable learnability}\label{sec:chars}
\subsection{Proper (S)CPAC learnability}\label{ssec:charprop}
We saw that a hypothesis class is (S)CPAC learnable if it has finite VC dimension and \textsc{ERM} is computably implementable. For SCPAC learnability, this condition pair is also necessary.
\begin{theorem}\label{theorem:cpacefferm}
A hypothesis class $\mathcal{H}$ is properly SCPAC learnable if and only if VCdim$(\mathcal{H})<\infty$ and there exists an algorithm that implements \textsc{ERM}$_\mathcal{H}$.
\end{theorem}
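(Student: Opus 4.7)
The ``if'' direction is Proposition~\ref{propo:efferms}, so my plan is entirely for the converse: assuming $\mathcal{H}$ is properly SCPAC learnable via total computable learner $A$ and total computable sample complexity $m_\mathcal{H}$, I will extract both finiteness of $\mathrm{VCdim}(\mathcal{H})$ and a total computable implementation of $\mathrm{ERM}_\mathcal{H}$.

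Finite VC dimension comes essentially for free. Given $\epsilon,\delta\in(0,1)$, I would set $a=\lceil 1/\epsilon\rceil$ and $b=\lceil 1/\delta\rceil$, whereupon the SCPAC guarantee (\ref{eq:scpac}) at $(a,b)$ immediately implies the PAC guarantee (\ref{eq:pac}) at $(\epsilon,\delta)$ with sample complexity $m_\mathcal{H}(a,b)$. Hence $A$ is a PAC learner, and Theorem~\ref{theorem:funpac} yields $\mathrm{VCdim}(\mathcal{H})<\infty$. Note that for this step only the \emph{existence} of $m_\mathcal{H}$ matters, not its computability.

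The substantive step is to effectively extract an ERM from $(A,m_\mathcal{H})$. The idea is to feed $A$ i.i.d.\ samples from the \emph{empirical} distribution $\mathcal{D}_S$ of the input $S$, since $L_{\mathcal{D}_S}(h)=L_S(h)$ makes population risk and empirical risk coincide. Concretely, on input $S$ with $|S|=n$, the plan is to compute $m=m_\mathcal{H}(n+1,2)$, enumerate the $n^m$ index sequences $(i_1,\dots,i_m)\in\{1,\dots,n\}^m$, form each corresponding sample $S'=((x_{i_1},y_{i_1}),\dots,(x_{i_m},y_{i_m}))$, compute the hypothesis $A(S')$ (which halts by totality of $A$), evaluate $L_S(A(S'))$, and return any hypothesis with minimum observed empirical risk. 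Applying (\ref{eq:scpac}) at $a=n+1$, $b=2$ to the distribution $\mathcal{D}_S$ guarantees that the probability of a ``bad'' output is strictly less than $1/2$, so some $S'$ in the support must yield $L_S(A(S'))\leq \min_{h\in\mathcal{H}} L_S(h)+1/(n+1)$. The closing trick is that $L_S$ only takes values in $\{0,1/n,2/n,\dots,1\}$ while $1/(n+1)<1/n$, so this near-optimality inequality collapses to equality, making the returned hypothesis an exact element of $\argmin_{h\in\mathcal{H}} L_S(h)$ (it lies in $\mathcal{H}$ because $A$ is proper).

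The main obstacle I expect is convincing the reader that the empirical-distribution trick is genuinely \emph{effective}: the finiteness and effective enumerability of the support of $\mathcal{D}_S^m$ rely on $m$ being computed from $S$, which is precisely what total computability of $m_\mathcal{H}$ buys. Without that ingredient (i.e.\ in plain CPAC where the sample complexity need only exist) the analogous argument stalls, which is why this characterization is stated for the strong variant. A cosmetic edge case is $S=\emptyset$, where any $h\in\mathcal{H}$ trivially minimizes $L_\emptyset$ and can be produced by running $A$ on some fixed nonempty sample.
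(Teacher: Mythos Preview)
Your proposal is correct and follows essentially the same route as the paper: derive finite VC dimension from PAC learnability, then implement $\textsc{ERM}_\mathcal{H}$ by running $A$ on all length-$m$ samples from the empirical distribution $\mathcal{D}_S$ with $m=m_\mathcal{H}(a,b)$ for some $a>n$, and use the discreteness of $L_S$ to collapse near-optimality to exact minimization. Your explicit choices $a=n+1$, $b=2$, the remark on where computability of $m_\mathcal{H}$ is needed, and the handling of $S=\emptyset$ are all fine elaborations of the same argument.
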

\begin{proof}
It remains to show the left-to-right direction. So suppose $\mathcal{H}$ is SCPAC learnable. Then $\mathcal{H}$ is PAC learnable, so VCdim$(\mathcal{H})<\infty$; and there are computable learning function $A$ and computable sample complexity function $m_\mathcal{H}$ such that for all $a,b \in \mathbb{N}$, for all $m \geq m_\mathcal{H}(a,b)$ and any distribution $\mathcal{D}$ over $\mathcal{X} \times \mathcal{Y}$ we have \eqref{eq:scpac}.
Using $A$, we can computably implement \textsc{ERM}$_\mathcal{H}$ as follows. 

For given training sample $S = ((x_1,y_1),\dots,(x_n,y_n))$, 
 define distribution $\mathcal{D}_S$ by $\mathcal{D}_S((x_i,y_i))=1/n$ for each $(x_i,y_i) \in S$ (in case of repetitions in $S$, we simply add up the probabilities). Choose $a>n$ and any $b$, and compute $m = m_\mathcal{H}(a,b)$. Let $\mathcal{S}_{\mathcal{D}_S}^m$ be the set of all possible length-$m$ samples that can be generated from $\mathcal{D}_S$. 
By running $A$ on all these sequences, we can computably pick some $\hat{S} \in \arg\min_{S' \in \mathcal{S}_{\mathcal{D}_S}^m} L_S(A(S'))$. The claim is that $\hat{h}=A(\hat{S})$ is also in $\arg\min_{h \in \mathcal{H}}L_S(h)$. Namely, if not, then for all $S' \in \mathcal{S}_{\mathcal{D}_S}^m$ we would have $L_S(A(S'))> \min_{h \in \mathcal{H}}L_S(h)$. Specifically, each $A(S')$ would make at least one more mistake on $S$ than the $h \in \arg \min_{h \in \mathcal{H}}L_S(h)$, which by definition of $\mathcal{D}_S$ implies $L_{\mathcal{D}_S}(A(S'))\geq \min_{h \in \mathcal{H}}L_{\mathcal{D}_S}(h)+1/n$. But that implies that with certainty ($\mathcal{D}^m_S$-probability 1) we would sample $S' \sim \mathcal{D}_S$ of length $m$ with $L_{\mathcal{D}_S}(A(S')) > \min_{h \in \mathcal{H}} L_{\mathcal{D}_S}(h) + 1/a$, contradicting \eqref{eq:scpac}. 
\end{proof}
We do not know whether CPAC learnability is not already equivalent to SCPAC learnability. If it is not, then Theorem \ref{theorem:cpacefferm}, which constitutes an effective version of the original equivalence between PAC learnability and PAC learnability by \textsc{ERM}, gives reason for thinking that SCPAC learnability is a natural notion. Moreover, the above proof suggests that an $\mathcal{H}$ that is CPAC but not SCPAC learnable has extreme properties. In particular, it can only be learnable by an algorithm $A$ for which we cannot compute an upper bound on any corresponding sample complexity function $g_{b}(a)=m(a,b)$ for fixed $b$. That is to say, the sample complexity must grow faster in $a$ than any computable function.
\begin{question}\label{q:cpac=scpac}
Does there exist a hypothesis class that is properly CPAC learnable but not properly SCPAC learnable?
\end{question}
In any case, both the negative and the positive results on CPAC learning in \citep{aablu20alt} also go through for SCPAC learning: the first (Theorems 9 and 11) because the latter is stronger, the second (Theorems 10, 13, and 15; Corollary 14) because they rely on showing the computable implementability of \textsc{ERM}, which  already gives SCPAC learnability.

\subsection{Improper (S)CPAC learnability}\label{ssec:charimpr}

We now turn to the improper case. To be clear,  we use the qualifier ``improper'' as a generalization of ``proper.''  We will use the qualifier ``strictly improper'' to mean ``improper but not proper.'' 
The following fact is immediate from the definitions.
\begin{fact}\label{fact:primpr}
If $\mathcal{H}$ is improperly CPAC learnable and $\mathcal{H}' \subseteq \mathcal{H}$ then $\mathcal{H}'$ is improperly CPAC learnable. The same holds for improper SCPAC learnability.
\end{fact}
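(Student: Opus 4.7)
The plan is to verify the Fact by re-using the same learner. Suppose $A$ is a total computable learner that improperly CPAC learns $\mathcal{H}$, witnessed by some sample complexity function $m_{\mathcal{H}}: (0,1)^2 \to \mathbb{N}$. For any $\mathcal{H}' \subseteq \mathcal{H}$, I claim that the very same $A$ improperly CPAC learns $\mathcal{H}'$, with sample complexity $m_{\mathcal{H}'} := m_{\mathcal{H}}$. Since the notion of improper learning does not constrain the range of $A$, and since $A$ is already total computable, no modification is needed; only the learning guarantee has to be transferred from $\mathcal{H}$ to $\mathcal{H}'$.

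The transfer is immediate from monotonicity of the minimum over a larger class. Since $\mathcal{H}' \subseteq \mathcal{H}$, for every distribution $\mathcal{D}$ on $\mathcal{X} \times \mathcal{Y}$,
\[
\min_{h \in \mathcal{H}} L_{\mathcal{D}}(h) \;\leq\; \min_{h \in \mathcal{H}'} L_{\mathcal{D}}(h).
\]
Hence for every $\epsilon, \delta \in (0,1)$ and every $m \geq m_{\mathcal{H}}(\epsilon,\delta)$, the event $\{L_{\mathcal{D}}(A(S)) \leq \min_{h \in \mathcal{H}} L_{\mathcal{D}}(h) + \epsilon\}$ is contained in $\{L_{\mathcal{D}}(A(S)) \leq \min_{h \in \mathcal{H}'} L_{\mathcal{D}}(h) + \epsilon\}$, so the latter also has probability at least $1-\delta$. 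This is exactly the improper PAC guarantee for $\mathcal{H}'$.

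For the SCPAC version the argument is verbatim the same: the learner $A$ and the sample complexity $m_{\mathcal{H}}(a,b)$ witnessing \eqref{eq:scpac} for $\mathcal{H}$ are both total computable, and the inequality above shows that exceeding $\min_{h \in \mathcal{H}'} L_{\mathcal{D}}(h) + 1/a$ implies exceeding $\min_{h \in \mathcal{H}} L_{\mathcal{D}}(h) + 1/a$, so the bound $<1/b$ on the failure probability carries over. There is no real obstacle here; the only thing to note is that ``improper'' is essential, since a proper learner for $\mathcal{H}$ need not output hypotheses in $\mathcal{H}'$, which is precisely why the analogous statement for proper (S)CPAC learning would fail.
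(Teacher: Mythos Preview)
Your argument is correct and is exactly the ``immediate from the definitions'' reasoning the paper alludes to (the paper gives no explicit proof for this Fact). The monotonicity $\min_{h\in\mathcal{H}}L_{\mathcal{D}}(h)\le \min_{h\in\mathcal{H}'}L_{\mathcal{D}}(h)$ is the only point needed, and your remark that properness would break the argument is apt.
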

\citet[Theorem 9, Theorem 11]{aablu20alt} exhibit two classes $\mathcal{H}_\textrm{halting}$ and $\mathcal{H}_\textrm{LT}$ that are not properly CPAC learnable, yet that \emph{are} improperly (so \emph{strictly} improperly) CPAC (indeed SCPAC) learnable (ibid., p.\ 59). Intuitively, the reason is that the incomputable information encoded in these classes can be ``blotted out'' by adding more hypotheses. This is easy if (as in the case of $\mathcal{H}_\textrm{halting}$ and $\mathcal{H}_\textrm{LT}$) a class  only contains, for some constant $b$, hypotheses (seen as sets of positively labeled instances) of size bounded by $b$. Then the obvious SCPAC learnability of the superclass of \emph{all} such $b$-bounded-size hypotheses means by Fact \ref{fact:primpr} that the original class is improperly SCPAC learnable.\footnote{A class need not have this boundedness property for similar reasoning to go through, as shown by an example of one of the referees. For any $b$-bounded-size $\mathcal{H} = \{h_i\}_i$ define $\mathcal{H}' = \{h_i'\}_i$ by $h_i'(x)=h_i(x)/2$ if $x$ is even and $h_i'(x)=1$ otherwise,
yielding a class of infinite hypotheses that is nevertheless extendable to a properly CPAC learnable class. An interesting further question is to find a more concrete characterization of such extendability.}


In general, by Fact \ref{fact:primpr}, extendability to a proper (S)CPAC learnable class is sufficient  for improper (S)CPAC learnability; the next question, towards an actual characterization, is whether it is actually a \emph{necessary} condition \citep[Conjecture 23]{aablu20alt}. 
But a preceding question is whether, at least for RER hypothesis classes, there is not already a more trivial characterization: \emph{every} RER class with finite VC dimension is improperly (S)CPAC learnable. We show here that this is not the case. 
For this purpose we take the hypothesis class $\mathcal{H}_\textrm{init}$ defined by \citet[p.\ 4641]{aablu21colt}, which they already conjecture is not even improperly CPAC learnable (ibid., Conjecture 9). We slightly reformulate their definition. 
Let, for each $s \in \mathbb{N}$, computable hypothesis $h_s$ be defined by
\begin{align*}
h_s(x) =
\begin{cases}
 1 & \textrm{if } \phi_{x,s}(x)\downarrow  \\ 
 0 & \textrm{otherwise},
\end{cases}
\end{align*}
and let $\mathcal{H}_\textrm{init} := \{ h_s \}_{s \in \mathbb{N}}$. This class is in fact DR  and has VC dimension 1. 
First we need a lemma.

\begin{lemma}\label{lmm:cpacoutg}
If $\mathcal{H}$ is improperly CPAC learnable, then for 
sufficiently large $n$, we can computably find for any $X=\{x_1,\dots,x_n\}  \subset \mathcal{X}$ of size $n$ a function $g: \{ x_1,\dots,x_n\} \rightarrow \{0,1\}$ with $g \notin \mathcal{H}_{|X}$.
\end{lemma}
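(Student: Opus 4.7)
The plan combines Sauer's lemma with a PAC-based one-sided test for membership in $\mathcal{H}_{|X}$ built from the improper CPAC learner $A$. Since $\mathcal{H}$ is (even improperly) PAC learnable, Theorem \ref{theorem:funpac} gives $d := \mathrm{VCdim}(\mathcal{H}) < \infty$, and Sauer's lemma then supplies an $n_0$ (depending only on $d$) such that $|\mathcal{H}_{|X}| \leq \sum_{i=0}^d \binom{n}{i} < 2^n$ for every $X$ with $|X| = n \geq n_0$; so for such $X$ the set $\{0,1\}^X \setminus \mathcal{H}_{|X}$ is nonempty, and the task is to produce an element computably from $X$.

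The key handle from $A$ is a semi-decision for membership. For each $f \in \{0,1\}^X$, let $\mathcal{D}_f$ be uniform on $\{(x, f(x)) : x \in X\}$; then $\min_{h \in \mathcal{H}} L_{\mathcal{D}_f}(h) = 0$ iff $f \in \mathcal{H}_{|X}$, and $\geq 1/n$ otherwise. Applying the PAC guarantee for $A$ with $\epsilon < 1/n$---so that $L_{\mathcal{D}_f}(A(S)) \leq \epsilon$ forces $A(S)_{|X} = f$---yields: for every $f \in \mathcal{H}_{|X}$ there is some (possibly non-effective) $m^*(f)$ such that, for all $m \geq m^*(f)$, a positive fraction of the samples $S \in \mathrm{supp}(\mathcal{D}_f^m) \subseteq (X \times \{0,1\})^m$ satisfies $A(S)_{|X} = f$. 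The r.e.\ predicate ``$\exists m, S \in (X \times \{0,1\})^m : A(S)_{|X} = f$'' is thus true of every $f \in \mathcal{H}_{|X}$, and is semi-decided by dovetailing $m$ and $S$.

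The procedure then dovetails (i) the enumeration of labelings of $X$ in lex order, (ii) the above semi-decision on each, and (iii) candidate Sauer bounds $N(d') := \sum_{i=0}^{d'} \binom{n}{i}$ for $d' = 0, 1, \ldots$ (so as not to need $d$ in advance). Output rule: as soon as, for some $d'$ and some stage, $N(d')$ of the first $N(d') + 1$ enumerated labelings have been confirmed, output the unconfirmed one as the candidate $g$. The intended correctness: for $d' \geq d$, Sauer bounds $|\mathcal{H}_{|X}|$ by $N(d')$; if every confirmed labeling in fact lies in $\mathcal{H}_{|X}$, then the remaining labeling is forced outside $\mathcal{H}_{|X}$.

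\emph{The main obstacle} is that the naive semi-decision ``$\exists S : A(S)_{|X} = f$'' can also hold for $f \notin \mathcal{H}_{|X}$, since an improper learner is free to output labelings on $X$ that no $h \in \mathcal{H}$ realizes; the confirmed set is only a superset of $\mathcal{H}_{|X}$, not bounded a priori by $N(d')$, so the Sauer-based output rule can misfire. The intended fix is to strengthen the test to a probability threshold $\Pr_{S \sim \mathcal{D}_f^m}[A(S)_{|X} = f] \geq 1/2$ and exploit the counting inequality $\sum_{f \in \{0,1\}^X} \Pr_{S \sim \mathcal{D}_f^m}[A(S)_{|X} = f] \leq 2^m$ (each $S \in (X \times \{0,1\})^m$ extends to at most one $f$ via $A(S)_{|X}$), which caps the ``strongly confirmed'' set at $2^{m+1}$. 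The delicate step is to choose parameters so that, for $n$ sufficiently large in terms of $d$, a single $m$ simultaneously witnesses $\mathcal{H}_{|X} \subseteq \{f : \Pr \geq 1/2\}$ (PAC guarantee) and $|\{f : \Pr \geq 1/2\}| < 2^n$ (counting), at which point any $g$ below the threshold is certifiably outside $\mathcal{H}_{|X}$.
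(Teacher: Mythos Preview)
Your plan has a real gap exactly where you flag it as ``delicate.'' With the exact-match test $\Pr_{S\sim\mathcal{D}_f^m}[A(S)_{|X}=f]\geq 1/2$, the PAC guarantee only forces $f\in\mathcal{H}_{|X}$ into the confirmed set once $m\geq m_\mathcal{H}(\epsilon,1/2)$ for some $\epsilon<1/n$ (so that $L_{\mathcal{D}_f}(A(S))\leq\epsilon$ implies $A(S)_{|X}=f$). But even in the realizable case, information-theoretic lower bounds give $m_\mathcal{H}(\epsilon,1/2)=\Omega(d/\epsilon)=\Omega(dn)$, while your counting inequality $\sum_f\Pr[\cdot]\leq 2^m$ only bounds the confirmed set below $2^n$ when $m\leq n-2$. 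These two constraints are incompatible for every $n$, so no choice of parameters makes the argument go through as written. The Sauer/dovetailing layer in your first part is also superfluous and does not help here.

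The paper's proof avoids this tension by working with a \emph{fixed} accuracy ($\epsilon=1/8$, $\delta=1/7$) and invoking the computable No-Free-Lunch theorem of \citet[Lemma 19]{aablu20alt}: for $n=2m$ with $m=m_\mathcal{H}(1/8,1/7)$, one can computably find, for any $X$ of size $n$, a labeling $g$ on which $A$ incurs error $\geq 1/8$ with probability $\geq 1/7$ under the uniform $g$-distribution. If $g$ were in $\mathcal{H}_{|X}$ the best-in-class error would be $0$, contradicting the PAC guarantee; hence $g\notin\mathcal{H}_{|X}$. No Sauer bound, no dovetailing, no enumeration of labelings is needed. Your approach becomes essentially this argument if you relax ``$A(S)_{|X}=f$'' to ``$L_{\mathcal{D}_f}(A(S))\leq 1/8$'': then $m$ can be fixed independently of $n$, the NFL averaging (which is what your counting inequality is trying to be) guarantees a failing $g$ for $n=2m$, and that $g$ is forced outside $\mathcal{H}_{|X}$ by the PAC bound.
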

\begin{proof}
Suppose there exists an algorithm $A$ that improperly learns $\mathcal{H}$. Picking some $a > 8$ and $b > 7$, that means that there is sufficiently large $m=m(\epsilon,\delta)$ such that for any $\mathcal{D}$ over $\mathcal{X} \times \{0,1\}$  
\begin{align*}
\textrm{Prob}_{S \sim \mathcal{D}^m}\left[L_\mathcal{D}(A(S)) \geq \min_{h \in \mathcal{H}} (L_\mathcal{D}(h))+ 1/8 \right] < 1/7.
\end{align*}
But by the computable No-Free-Lunch Theorem \citep[Lemma 19]{aablu20alt}, for any $X=\{x_1,\dots,x_{n}\}  \subset \mathcal{X}$ of size $n=2m$ we can computably find a function $g: \{ x_1,\dots,x_{n}\} \rightarrow \{0,1\}$  such that for distribution $\hat{\mathcal{D}}$ uniform over $\{(x_1,g(x_1)),\dots,(x_n,g(x_n)) \}$ we have
\begin{align*}
\textrm{Prob}_{S \sim \hat{\mathcal{D}}^m}\left[L_{\hat{\mathcal{D}}}(A(S)) \geq 1/8 \right] \geq 1/7.
\end{align*}
This implies that $g \notin \mathcal{H}_{|X}$, for else $\min_{h \in \mathcal{H}} (L_{\hat{\mathcal{D}}}(h))=0$ and we would have a contradiction. 
\end{proof}
\begin{theorem}
The class $\mathcal{H}_\mathrm{init}$ is not improperly CPAC learnable.
\end{theorem}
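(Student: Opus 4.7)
The plan is to derive a contradiction with Lemma \ref{lmm:cpacoutg} via a self-referential construction through the multivariate recursion theorem. Assume $\mathcal{H}_\mathrm{init}$ is improperly CPAC learnable. By the lemma there exist a sufficiently large $n$ and a total computable map $B$ such that for every $n$-element set $X\subset\mathcal{X}$, the function $B(X)\colon X\to\{0,1\}$ satisfies $B(X)\notin \mathcal{H}_{\mathrm{init}|X}$.

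The driving observation is that $\mathcal{H}_{\mathrm{init}|X}$ \emph{always} contains the halting labeling $\chi_K|X$ (where $\chi_K(x)=1$ iff $\phi_x(x)\downarrow$), since for every $s$ exceeding all elements of $X$ and all halting stages of halting $\phi_x(x)$ with $x\in X$, one has $h_s|X=\chi_K|X$. In particular $B(X)\neq \chi_K|X$ for every $n$-element $X$. I will force a contradiction by constructing a specific $X_0$ with $B(X_0)=\chi_K|X_0$.

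To this end I apply the $n$-ary (mutual) recursion theorem to obtain pairwise distinct indices $e_1,\dots,e_n$ such that, for each $i$, $\phi_{e_i}$ is the program that, on any input, first computes the bit $v_i:=B(\{e_1,\dots,e_n\})(e_i)$ (a finite computation, since $B$ is total), and then halts with output $1$ if $v_i=1$ and otherwise loops forever. Setting $X_0=\{e_1,\dots,e_n\}$, the construction gives $\phi_{e_i}(e_i)\downarrow$ iff $B(X_0)(e_i)=1$, so $\chi_K|X_0=B(X_0)$ as functions on $X_0$. Combined with $\chi_K|X_0\in \mathcal{H}_{\mathrm{init}|X_0}$, this yields $B(X_0)\in \mathcal{H}_{\mathrm{init}|X_0}$, contradicting the defining property of $B$.

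The main obstacle is the careful invocation of the multivariate recursion theorem to obtain \emph{distinct} indices $e_1,\dots,e_n$ satisfying the prescribed mutual self-reference through $B(\{e_1,\dots,e_n\})$. Distinctness can be secured by forcing each produced index into a prescribed residue class modulo $n$ via padding, so that the resulting computable construction indeed yields an $n$-element set $X_0$ to which Lemma \ref{lmm:cpacoutg} applies.
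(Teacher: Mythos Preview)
Your proposal is correct and follows essentially the same route as the paper: invoke Lemma~\ref{lmm:cpacoutg} to obtain $B$, then use the $n$-fold Recursion Theorem to build indices $e_1,\dots,e_n$ whose diagonal halting behaviour coincides with $B(\{e_1,\dots,e_n\})$, yielding $B(X_0)\in\mathcal{H}_{\mathrm{init}|X_0}$. The only notable difference is in how distinctness of the fixed points is secured: the paper makes the programs \emph{semantically} distinct by having $\phi_{f_i(\cdot)}(0)=i$ (so distinct ranges force distinct indices), whereas you propose padding into distinct residue classes modulo $n$; the paper's trick is self-contained and avoids the extra care needed to thread padding through the multivariate recursion construction.
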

\begin{proof}
Suppose it is. 
Then by Lemma \ref{lmm:cpacoutg} there exists, for some sufficiently large $n$ of our choice, an algorithm $B$ that for any $n$ input elements $x_1,\dots,x_n$ proceeds as follows. If $x_i \neq x_j$ for all distinct $i,j \leq n$, then $B$ returns a function $g: \{ x_1,\dots,x_n\} \rightarrow \{0,1\}$  such that $g \notin \mathcal{H}_{\textrm{init}|\{x_1,\dots,x_n\}}$. If not, then $B$ returns some default function on $\{ x_1,\dots,x_n\}$, say the constant-0 function. 

We define, for each $i \leq n$, a total computable $n$-place $f_i$ such that
\begin{align*}
\phi_{f_i(x_1,\dots,x_n)}(z) = \begin{cases}
 i & \textrm{if } z = 0 \\
 0 & \textrm{if } z = x_i >0 \ \& \ B(x_1,\dots,x_n)(x_i)=1  \\
\uparrow & \textrm{otherwise}.
\end{cases}
\end{align*}
Now by the $n$-fold Recursion Theorem \citep[p.\ 117]{Smu93} there are $c_1, \dots, c_n>0$ such that for each $i\leq n$,
\begin{align*}
\phi_{c_i} = \phi_{f_i(c_1, \dots, c_n)}.
\end{align*}
Moreover, these $c_1, \dots, c_n$ must be distinct, else $\phi_{f_i(c_1, \dots, c_n)}=\phi_{f_j(c_1, \dots, c_n)}$ for some $i\neq j$, which is excluded by the fact that they have distinct range (for each $i$ only $\phi_{f_i(c_1, \dots, c_n)}$ has $i$ in its range). 
But then for function $g=B(\{c_1, \dots, c_n \})$ we have for each $c_i$ that $\phi_{c_i}(c_i)\downarrow$ iff $\phi_{f_i(c_1, \dots, c_n)}(c_i)\downarrow$ iff $g(c_i)=1$. This means there exists a large enough $s $ such that for each $i\leq n$, $\phi_{c_i,s}(c_i) \downarrow$ iff $g(c_i)=1$, which implies by definition of $\mathcal{H}_\mathrm{init}$ that $g \in \mathcal{H}_{\mathrm{init}|\{c_1,\dots,c_n\}}$, contrary to specification of $B$.
\end{proof}

It follows with Fact \ref{fact:primpr} that there are RER classes with finite VC dimension that cannot be extended to properly (S)CPAC learnable classes. So the latter extendability property is in this sense nontrivial; the question remains whether it actually characterizes improper (S)CPAC learnability.


\begin{question}[\citealp{aablu20alt}, Conjecture 23]\label{q:cocb=imprcpac}
Does there exist a (RER) class $\mathcal{H}$ that is not extendable to a properly (S)CPAC learnable class, yet that is improperly (S)CPAC learnable?
\end{question}

\section{Undecidability and complexity of learnability}\label{sec:undec}

\subsection{Undecidability}\label{ssec:undec}
There are two kinds of undecidability, that are related but not the same (see, e.g., \citealp[p.\ 211]{Poo14incol}; \citealp[pp.\ 251--52]{Ham20}; \citealp{Car21arxiv}).

\begin{enumerate}
\item \emph{Independence of a statement from an axiom system.}  A statement $Y$ is independent of (or undecidable in) axiom system $\mathcal{A}$ if neither $Y$ nor its negation can be derived from these axioms using the rules of logic. That is, neither $ \mathcal{A} \vdash Y$ nor $ \mathcal{A} \vdash \neg Y$. An example is the independence of the continuum hypothesis from the ZFC axioms of set theory. 
\item \emph{Unsolvability of a decision problem.} A decision problem, i.e., a family $\{ Q_i \}_{i \in \mathbb{N}}$ of problems with YES/NO answers, is unsolvable (or undecidable) if there is no decision algorithm that on each input $i \in \mathbb{N}$ returns the correct answer to $Q_i$. The standard example is the unsolvability of the Halting problem, that asks for each $i \in \mathbb{N}$ whether $\phi_i(i)\downarrow$.
\end{enumerate}
Let ``learnable'' in this section stand for \emph{any} specific notion of learnability. We first consider the undecidability of learnability in sense (2), or the unsolvability  of a \emph{learnability decision problem}.  

To a first approximation, a learnability decision question asks: does there exist a decision algorithm that for every given hypothesis class  returns YES if the class is learnable and returns NO if it is not? To make this question meaningful at all, we must presuppose some family $\mathsf{H}$ of hypothesis classes such that each $\mathcal{H} \in \mathsf{H}$ can actually be presented as input to a candidate decision algorithm. 
\begin{example}
It is impossible to effectively encode the family $\mathsf{H}^\mathrm{all}$ of \emph{all} hypothesis classes of computable hypotheses. A learnability problem for $\mathsf{H}^\mathrm{all}$ is therefore trivially undecidable: there exists no decision algorithm, because there cannot even exist an algorithm to query on each $\mathcal{H} \in \mathsf{H}^\mathrm{all}$.
\end{example}
Let a \emph{computable} family $\mathsf{H} = \{ \mathcal{H}_j\}_{j \in \mathbb{N}}$ of hypothesis classes be such that there is a computable procedure that for each given $j \in \mathbb{N}$ retrieves an effective representation of $\mathcal{H}_j$; at the least, it uniformly retrieves an instruction for enumerating the elements of $\mathcal{H}_j$ (so the hypothesis classes of a computable family are all RER).
For any computable family $\{ \mathcal{H}_j\}_{j \in \mathbb{N}}$ we can clearly state a corresponding \emph{decidability of learnability question}: does there exists an algorithm that for each input $j$ returns YES if $\mathcal{H}_j$ is learnable and NO otherwise? 

We describe a general way of constructing computable families of hypothesis classes, and show that for each family constructed in this way, the decision problem, if not trivial, is undecidable. Pick any base class $\hat{\mathcal{H}} \subset \mathcal{H}_\mathrm{comp}$ of computable hypotheses that we can code onto the natural numbers. 
The uniformly c.e.\ family $\{W_i \}_{i \in \mathbb{N}}$ of all c.e.\ subsets of $\mathbb{N}$, or equivalently the family $\{ \phi_i \}_{i \in \mathbb{N}}$ of all p.c.\ functions, picks out the computable family $\mathsf{H}=\{ \mathcal{H}_i\}_{i \in \mathbb{N}}$ of all RER hypothesis classes $\mathcal{H}_i \subseteq \hat{\mathcal{H}}$. 
We call such a family $\mathsf{H}$  a \emph{maximal} computable family of hypothesis classes. Importantly, such a maximal family $\mathsf{H}=\{ \mathcal{H}_i\}_{i \in \mathbb{N}}$ has the property that if $\phi_i = \phi_j$ then also $\mathcal{H}_i = \mathcal{H}_j$. 

Now the answer to our question is \emph{yes}, for any computable family that either only contains learnable or only contains unlearnable hypotheses classes. For such a family that is \emph{trivial for learnability}, either the constant YES algorithm or the constant NO algorithm is a decision algorithm. 
\begin{example}
The maximal computable family constructed from the base class $\mathcal{H}_\mathrm{ivl}$ of interval hypothesis is a trivial family for PAC learnability: already the base class has finite VC dimension. This family is also trivial for improper (S)CPAC learning (as the base class is SCPAC learnable, Example \ref{ex:scpacnotrer}). However, the family is nontrivial for proper (S)CPAC learning: there exist RER classes of interval hypotheses that are not CPAC learnable \citep[Theorem 11]{aablu20alt}.
\end{example}
But as soon as a maximal computable family is nontrivial for learnability, the answer is \emph{no}.
\begin{proposition}\label{propo:undecunsolv}
For any particular notion of learnability, and any maximal computable family $\mathsf{H}$ of hypotheses classes that is nontrivial for this learnability, the learnability problem is unsolvable. 
\end{proposition}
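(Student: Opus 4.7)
The plan is to apply Rice's Theorem to the index set of learnable hypothesis classes in the family. Let $\mathsf{H} = \{\mathcal{H}_i\}_{i \in \mathbb{N}}$ be a maximal computable family that is nontrivial for the notion of learnability under consideration, and set $L := \{ i \in \mathbb{N} : \mathcal{H}_i \text{ is learnable} \}$. Solving the learnability problem for $\mathsf{H}$ is by definition the same as deciding membership in $L$, so it suffices to show that $L$ is undecidable.

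First I would verify that $L$ is an index set in the sense of Rice's Theorem, i.e., that $i \in L$ depends only on the extension of $\phi_i$. This is exactly the invariance property highlighted just before the proposition: since $\mathsf{H}$ is maximal, whenever $\phi_i = \phi_j$ as partial computable functions we have $\mathcal{H}_i = \mathcal{H}_j$, so $i \in L$ iff $j \in L$. Thus $L = \{ i : \phi_i \in \mathcal{C}\}$ for the well-defined class $\mathcal{C} := \{ \phi_i : \mathcal{H}_i \text{ is learnable} \}$ of partial computable functions.

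Next I would use nontriviality of $\mathsf{H}$ for learnability to conclude that $\mathcal{C}$ is a nontrivial class of p.c.\ functions. By assumption, there exist indices $j_0, j_1 \in \mathbb{N}$ such that $\mathcal{H}_{j_0}$ is learnable and $\mathcal{H}_{j_1}$ is not, so $\phi_{j_0} \in \mathcal{C}$ while $\phi_{j_1} \notin \mathcal{C}$; hence $\mathcal{C}$ is neither empty nor the set of all p.c.\ functions. Rice's Theorem then immediately yields that $L$ is undecidable, which is what was to be shown.

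The only subtle point, and essentially the whole content of the argument, is the first step: ensuring that learnability lifts to a genuine semantic property of p.c.\ functions. This is precisely what maximality of the family buys us, and why the proposition is stated for that class of families; without the invariance $\phi_i = \phi_j \Rightarrow \mathcal{H}_i = \mathcal{H}_j$, the set $L$ need not be an index set and Rice's Theorem would not apply.
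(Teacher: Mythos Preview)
Your proof is correct and essentially identical to the paper's: both define the index set $L = \{ i : \mathcal{H}_i \text{ learnable}\}$, use the maximality property $\phi_i = \phi_j \Rightarrow \mathcal{H}_i = \mathcal{H}_j$ to verify it is a genuine index set, invoke nontriviality of $\mathsf{H}$ to rule out $L \in \{\emptyset, \mathbb{N}\}$, and conclude by Rice's Theorem. Your closing remark on why maximality is needed is a nice addition but does not diverge from the paper's argument.
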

\begin{proof}
By the correspondence between the members of $\mathsf{H}$ and all p.c.\ functions, this follows directly from Rice's Theorem (see \citealp[p.\ 16]{Soa16}) that every \emph{nontrivial index set} is incomputable. An index set $I \subseteq \mathbb{N}$ is a set of indices of p.c.\ functions closed under extensional equivalence,
\begin{equation*}
i \in I \ \& \ \phi_i = \phi_j \Longrightarrow j \in I,
\end{equation*}
and nontrivial if neither $I = \emptyset$ nor $I = \mathbb{N}$. Now for any maximal computable family $\mathsf{H} = \{ \mathcal{H}_i \}_{i \in \mathbb{N}}$ of hypothesis classes, we have that if $\mathcal{H}_i$ is learnable and $\phi_i = \phi_j$, then $\mathcal{H}_i = \mathcal{H}_j$ and $\mathcal{H}_j$ is learnable, too; so that the set $I_{L(\mathsf{H})} = \{ i \in \mathbb{N}: \mathcal{H}_i \textrm{ learnable}\}$ is an index set, that is non-trivial if $\mathsf{H}$ is. But then Rice's Theorem says that $I_{L(\mathsf{H})}$ is incomputable, which just means that there can be no decision algorithm that for every $i$ returns YES if $i \in I_{L(\mathsf{H})}$ and NO otherwise.
\end{proof}
Undecidability is not limited to maximal computable families as constructed above. 
\begin{example}[\citealp{Car21arxiv}, Section 2.3]
Caro constructs a computable family $\mathsf{H}_\mathrm{halt}=\{ \mathcal{H}_{M_j} \}_{j \in \mathbb{N}}$ uniformly from the class $\{ M_j \}_{j \in \mathbb{N}}$ of Turing machines (i.e, the class $\{ \phi_j \}_{j \in \mathbb{N}}$ of p.c.\ functions), and proves undecidability of the PAC learnability problem for $\mathsf{H}_\mathrm{halt}$. This family also has the property that $\phi_i = \phi_j$ implies $\mathcal{H}_{M_i} = \mathcal{H}_{M_j}$, so that the previous reasoning by Rice's Theorem actually applies here too. Caro's own proof is a direct derivation of the undecidability of finiteness of VC dimension for $\mathsf{H}_\mathrm{halt}$, which entails undecidability of PAC learnability and also (as noted by \citealp[Section 5]{Car21arxiv}) of realizable CPAC learnability, as both are characterized by finite VC dimension (for RER classes). In fact, by Theorem \ref{theorem:cpacefferm}, finite VC dimension here already characterizes (agnostic) SCPAC learnability, because one can verify that all classes in $\mathsf{H}_\mathrm{halt}$ admit of a computable implementation of ERM. Still, the advantage of the generality of the reasoning by Rice's Theorem is that it directly gives us undecidabilility for \emph{any} learnability notion that $\mathsf{H}_\mathrm{halt}$ is nontrivial for.
\end{example}
Caro also already showed undecidability of PAC learning in sense (1).  
\begin{example}[\citealp{Car21arxiv}, Section 2.2]\label{ex:caroindzfc}
Caro presents a construction, for any sufficiently expressive formal system $F$, of an RER hypothesis class $\mathcal{H}_F$ such that $\mathcal{H}_F$ has finite VC dimension if and only if $F$ is consistent. Since, by G\"odel's second incompleteness theorem, $F$ (provided it is consistent) does not decide its own consistency, this yields, for any $F$, that $F$ does not decide the learnability of $\mathcal{H}_F$. In particular, ZFC (provided it is consistent) does not decide the learnability of $\mathcal{H}_{\textrm{ZFC}}$.
\end{example}
As \citet[Remark 2.24]{Car21arxiv} also notes, there is a way of directly deriving undecidability in sense (1) from undecidability in sense (2); so in particular from Proposition \ref{propo:undecunsolv}. We follow the reasoning outlined by \citet[pp.\ 212--13]{Poo14incol}. 
\begin{proposition}\label{propo:undecindep}
Given any particular notion of learnability that we can arithmetically characterize (which includes PAC learnability and SCPAC learnability, see Section \ref{ssec:compl}). For any computable family $\mathsf{H}=\{ \mathcal{H}_i\}_{i \in \mathbb{N}}$ of hypothesis classes such that the learnability decision problem is unsolvable (in particular, any maximal computable family for which this learnability is nontrivial), the learnability of infinitely many $\mathcal{H}_i$ is independent of ZFC (provided ZFC is arithmetically sound).
\end{proposition}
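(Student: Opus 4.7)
The plan is to derive a contradiction from the assumption that only finitely many $\mathcal{H}_i$ have learnability independent of ZFC, using the arithmetic characterization of learnability together with the soundness hypothesis to extract a decision algorithm from ZFC-provability.

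First I would fix an arithmetic formula $\psi(i)$ such that $\psi(i)$ holds in the standard model precisely when $\mathcal{H}_i$ is learnable; this is the role of the ``arithmetically characterizable'' assumption, and the next subsection of the paper presumably supplies such a $\psi$ for PAC and SCPAC learnability. Arithmetic soundness of ZFC then gives: if $\mathrm{ZFC}\vdash \psi(i)$ then $\mathcal{H}_i$ is learnable, and if $\mathrm{ZFC}\vdash \neg\psi(i)$ then $\mathcal{H}_i$ is not learnable.

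Next I would argue by contradiction. Suppose there is a finite set $F\subset\mathbb{N}$ such that for every $i\notin F$, ZFC decides $\psi(i)$. I would then exhibit a decision algorithm $D$ for the learnability problem on $\mathsf{H}$: on input $i$, if $i\in F$ output the (finitely many, hardcoded) correct answers; otherwise, enumerate ZFC-proofs in parallel until one of $\psi(i)$ or $\neg\psi(i)$ is proved, returning YES or NO accordingly. By the case assumption this search halts, and by arithmetic soundness the returned answer is correct. This contradicts the unsolvability of the learnability problem on $\mathsf{H}$ (which, by Proposition~\ref{propo:undecunsolv}, holds in particular for any maximal computable family nontrivial for the learnability notion). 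Hence the set of $i$ for which ZFC does not decide $\psi(i)$, i.e.\ for which the learnability of $\mathcal{H}_i$ is independent of ZFC, must be infinite.

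The main conceptual point to be careful about is the use of arithmetic soundness rather than mere consistency: consistency alone would allow ZFC to prove false $\Sigma_n$-statements, and the extracted ``decision algorithm'' would then not be correct. I would also note explicitly that hardcoding answers for the finite exceptional set $F$ is legitimate because every finite function is computable, so no effective access to $F$ itself is needed; the argument is nonconstructive in exactly this sense and does not identify \emph{which} $\mathcal{H}_i$ are ZFC-independent, only that infinitely many are. No genuine obstacle is expected beyond verifying that the learnability notions of interest admit the required arithmetic characterization, which the paper defers to Section~\ref{ssec:compl}.
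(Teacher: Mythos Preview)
Your proposal is correct and follows essentially the same approach as the paper: use the arithmetical characterization plus arithmetical soundness to turn ZFC-provability into a partial decision procedure via proof enumeration, then derive a contradiction with unsolvability. Your treatment is in fact slightly more careful than the paper's, which asserts ``(indeed infinitely many)'' without spelling out the hardcoding-of-finite-exceptions step that you make explicit.
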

\begin{proof}
Using the presupposed characterization  of the relevant notion of learnability, we can write a computable procedure that for each $i$ returns a statement $Y_i$ of first-order arithmetic 
that expresses that 
$\mathcal{H}_i$ is learnable. (For instance, for PAC learnability, the algorithm produces the statement \eqref{eq:aritmpac} in Section \ref{ssec:compl} below, uniformly plugging in arithmetical representations of the relevant ``atomic'' statements about computable objects, like $[h(x) \neq y_i]$.) If ZFC is  arithmetically sound, it only proves such statements (suitably recast in the language of set theory) that are in fact true. Thus we have a computable procedure that for each $i$ returns a statement $Y_i$ such that
\begin{itemize}
\item if ZFC $\vdash Y_i$ then $\mathcal{H}_i$ is learnable;
\item if ZFC $\vdash \neg  Y_i$ then $\mathcal{H}_i$ is not learnable.
\end{itemize}
But this gives us a decision procedure for learnability for $\mathsf{H}$ (for each $i$ enumerate theorems of ZFC until we find either $Y_i$ or $\neg Y_i$), \emph{unless} some (indeed infinitely many) $Y_i$ are independent of ZFC.
\end{proof}
\subsection{Arithmetical complexity}\label{ssec:compl}
The general proof by Rice's Theorem of undecidability of learnability does not use any specific properties of the notion(s) of learnability. The mathematical structure of learnability does come into play when we ask the natural next question, namely \emph{how} undecidable learnability is. Specifically,  what is the \emph{arithmetical complexity} of the relevant index set (see \citealp{Soa16})?



We start with standard PAC learnability, characterized by finiteness of VC dimension.  
We can spell out the property
$\textrm{VCdim}(\mathcal{H}) < d$ as 
\begin{align}\label{eq:aritmpac}
(\forall \ \textrm{distinct } x_1, \dots, x_d \in \mathcal{X}) (\exists y_1, \dots, y_d \in \{0,1\}) (\forall h \in \mathcal{H})(\exists i \leq d) \left[ h(x_i)\neq y_i \right].
\end{align}
Since only the first and the third quantifiers are unbounded, this is equivalent to a $\Pi_1$ statement. Then the property $\textrm{VCdim}(\mathcal{H}) < \infty$, equivalent to $(\exists d) [ \textrm{VCdim}(\mathcal{H}) < d ]$,
is a $\Sigma_2$ property. This gives an upper bound on the arithmetical complexity for any computable family of hypothesis classes.
\begin{fact}
The problem of PAC learnability for a computable family of hypothesis classes is no harder than $\Sigma_2$.
\end{fact}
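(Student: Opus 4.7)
The plan is to combine the fundamental theorem of PAC learning with a quantifier count of the arithmetical characterization of finite VC dimension that the excerpt has already displayed. By Theorem \ref{theorem:funpac}, a class is PAC learnable if and only if its VC dimension is finite, so it suffices to bound the arithmetical complexity, uniformly in $i$, of the predicate $\textrm{VCdim}(\mathcal{H}_i) < \infty$, where $\{\mathcal{H}_i\}_{i \in \mathbb{N}}$ is the given computable family.

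First I would read off the displayed formula \eqref{eq:aritmpac}. For each fixed $d$, the predicate $\textrm{VCdim}(\mathcal{H}_i) < d$ is captured by a formula whose only \emph{unbounded} quantifiers are the outer $\forall x_1, \dots, x_d \in \mathcal{X}$ and the inner $\forall h \in \mathcal{H}_i$; the quantifiers over $y_1, \dots, y_d \in \{0,1\}$ and over $k \leq d$ are bounded and contribute nothing to the arithmetical count. Because the family is computable, $\mathcal{H}_i$ is RER, so $\forall h \in \mathcal{H}_i$ can be replaced by a universal quantifier over natural numbers ranging over indices in a computable enumeration of codes of the (total) hypotheses of $\mathcal{H}_i$. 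Since the functions so indexed are total, the atomic predicate $h(x_k) \neq y_k$ is decidable. Collapsing the two adjacent unbounded universal blocks yields a $\Pi_1$ formula for $\textrm{VCdim}(\mathcal{H}_i) < d$, uniformly in $i$ and $d$.

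Finally, $\textrm{VCdim}(\mathcal{H}_i) < \infty$ is equivalent to $(\exists d)[\textrm{VCdim}(\mathcal{H}_i) < d]$, so prefixing the $\Pi_1$ formula with an unbounded existential quantifier produces a $\Sigma_2$ predicate in $i$, which is exactly the bound claimed.

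The only point demanding care is the handling of the unbounded quantifier over $h \in \mathcal{H}_i$: one must invoke the RER presentation to replace quantification over a class of \emph{functions} with quantification over natural number indices, and must then exploit totality of the enumerated functions to keep the innermost equality decidable rather than only $\Sigma_1$. Without this, the quantifier alternation would inflate and the $\Sigma_2$ bound would be lost.
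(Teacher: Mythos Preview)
Your proposal is correct and follows essentially the same approach as the paper: reduce PAC learnability to finite VC dimension via Theorem~\ref{theorem:funpac}, observe that \eqref{eq:aritmpac} is $\Pi_1$ because only the two universal blocks are unbounded, and then prefix with $(\exists d)$ to obtain $\Sigma_2$. You spell out more carefully than the paper why the quantifier over $h \in \mathcal{H}_i$ really is an unbounded universal over $\mathbb{N}$ with a decidable matrix (via the RER enumeration and totality of the listed hypotheses), a point the paper leaves implicit.
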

Moreover, this bound is strict: as observed before by \citet{Sch99jcss} there are computable families of hypothesis classes such that the problem is $\Sigma_2$-complete. 
 The following proof is similar to that of \citet[theorem 4.1]{Sch99jcss} with reference to \citet{Weh90phd}, and is also implicit in \citet{Zha18unpubl}.
\begin{proposition}[\citealp{Sch99jcss}]\label{propo:pacsigma2}
There exists a computable family of hypothesis classes such that the problem of PAC learnability is $\Sigma_2$-complete.
\end{proposition}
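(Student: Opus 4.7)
The plan is to establish $\Sigma_2$-hardness by a many-one reduction from the canonical $\Sigma_2$-complete set $\mathrm{Fin} = \{ e \in \mathbb{N} : W_e \text{ is finite}\}$ to the PAC learnability decision problem for a suitably designed computable family $\{\mathcal{H}_i\}_{i \in \mathbb{N}}$. Combined with the upper bound already established, this will give $\Sigma_2$-completeness. Since we are free to choose both the base class and the computable family, we only need to encode the size of $W_i$ into the VC dimension of $\mathcal{H}_i$ in a uniformly effective way.

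The construction goes as follows. Partition $\mathcal{X} = \mathbb{N}$ into disjoint finite blocks $B_1, B_2, \ldots$ with $|B_s| = s$, using a fixed computable bijection. For each index $i$, enumerate $W_i$ step by step; whenever the $s$-th distinct element enters $W_i$, we enumerate into $\mathcal{H}_i$ all $2^s$ hypotheses of the form ``arbitrary on $B_s$, identically $0$ elsewhere.'' Each hypothesis has finite support and is plainly computable (uniformly in a finite description), so this yields a uniformly c.e.\ (hence computable) family of RER hypothesis classes, based on a fixed base class $\hat{\mathcal{H}}$ of all finite-support hypotheses in the style of $\mathcal{H}_\mathrm{fin}$.

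The key verification is that $\mathrm{VCdim}(\mathcal{H}_i) = |W_i|$. Clearly $\mathcal{H}_i$ shatters $B_s$ whenever $|W_i| \geq s$, so if $W_i$ is infinite then $\mathrm{VCdim}(\mathcal{H}_i) = \infty$. Conversely, suppose $|W_i| = k < \infty$; I will argue $\mathcal{H}_i$ shatters no set of size $k+1$. Every $h \in \mathcal{H}_i$ is zero outside $B_1 \cup \cdots \cup B_k$, so any shattered set $X$ must lie in this union. Moreover, every $h \in \mathcal{H}_i$ that is nonzero anywhere is nonzero only within a \emph{single} block $B_s$ (with $s \leq k$), so if $X$ met two distinct blocks, no hypothesis could label both parts with $1$ simultaneously. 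Hence any shattered $X$ lies inside a single $B_s$ with $s \leq k$, yielding $|X| \leq k$. Thus $\mathrm{VCdim}(\mathcal{H}_i) < \infty$ iff $W_i$ is finite, and by the fundamental theorem of PAC learning this is equivalent to $\mathcal{H}_i$ being PAC learnable. The map $i \mapsto \mathcal{H}_i$ is therefore a computable many-one reduction from $\mathrm{Fin}$ to the PAC learnability problem for $\{\mathcal{H}_i\}_i$, giving $\Sigma_2$-hardness and, with the upper bound, $\Sigma_2$-completeness.

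The main obstacle to watch out for is the ``cross-block'' shattering clause: one has to ensure the enumerated hypotheses are structured so that shattering cannot exploit unions of blocks, which is why each newly added batch at stage $s$ is supported solely within $B_s$ and labels outside $B_s$ are fixed to $0$. Beyond that, the argument is routine: uniform computability of the enumeration is immediate from the uniform enumeration $\{W_i\}_i$, and the reduction is many-one since membership in $\mathrm{Fin}$ corresponds exactly to PAC learnability of the produced class.
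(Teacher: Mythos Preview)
Your proof is correct and follows the same overall strategy as the paper: a many-one reduction from the $\Sigma_2$-complete index set $\mathrm{Fin}$ to the PAC learnability problem for a computable family of classes of finite-support hypotheses, combined with the $\Sigma_2$ upper bound.

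The construction, however, differs. The paper simply fixes an enumeration $\{h_i\}_{i\in\mathbb{N}}$ of $\mathcal{H}_\mathrm{fin}$ and sets $\mathcal{H}_j := \{h_i : i \leq |W_j|\}$; then $W_j$ finite implies $\mathcal{H}_j$ is finite (hence finite VC dimension), while $W_j$ infinite gives $\mathcal{H}_j = \mathcal{H}_\mathrm{fin}$ (infinite VC dimension). This avoids any combinatorial analysis: the two cases are handled by the trivial facts that finite classes have finite VC dimension and that $\mathcal{H}_\mathrm{fin}$ does not. Your block construction instead yields the sharper identity $\mathrm{VCdim}(\mathcal{H}_i) = |W_i|$, at the cost of the cross-block shattering argument you flag. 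That extra precision is not needed for the proposition, so the paper's version is more economical; your version, on the other hand, makes the link between $|W_i|$ and the VC dimension fully explicit, which could be useful if one wanted finer control (e.g., reducing from sets defined by growth conditions on $|W_i|$).
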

\begin{proof}
We exhibit a computable family $\mathsf{H}= \{ \mathcal{H}_i \}_j$ for which the index set $\{ j \in \mathbb{N}: \mathcal{H}_j \textrm{ is learnable} \}$ is equal to the index set $\textrm{Fin} = \{ j \in \mathbb{N}: W_j \textrm{ is finite} \}$. The latter is well-known to be $\Sigma_2$-complete (see \citealp[p.\ 86]{Soa16}).

Let $\mathcal{H}_\mathrm{fin} = \{ h_i \}_{i \in \mathbb{N}}$ a computable enumeration of all hypotheses with finite support and $\{ W_j \}_{j \in \mathbb{N}}$ an enumeration of all c.e.\ sets. For every $j \in \mathbb{N}$ define c.e.\
\begin{align*}
N_j := \{ n \in \mathbb{N}: n \leq |W_j| \} = \{n \in \mathbb{N}: (\exists s)[ n \leq |W_{j,s}|]\},
\end{align*}
and let $\mathcal{H}_j := \{h_i: i \in N_j \}$. 
Then we have that $j \in \textrm{Fin}$ precisely if $\textrm{VCdim}(\mathcal{H}_j) < \infty$. Namely, if $j \in \textrm{Fin}$ then also $|N_j| = |\mathcal{H}|<\infty$ and $\textrm{VCdim}(\mathcal{H}_j) < \infty$. But if $j \notin \textrm{Fin}$ then $|N_j| = \mathbb{N}$ and $\mathcal{H}_j = \mathcal{H}_\mathrm{fin}$, so $\textrm{VCdim}(\mathcal{H}_j) = \infty$. 
\end{proof}
Next, we turn to SCPAC learnability. Recall its characterization, Theorem \ref{theorem:cpacefferm}, by the conjunction of finiteness of VC dimension and the computable implementability of ERM. We first introduce as a lemma an equivalent statement of the second conjunct, 
that we can then express arithmetically to give us an upper bound. 
\begin{lemma}\label{lmm:ermceset}
For computable hypothesis class $\mathcal{H}$, $\textsc{ERM}_\mathcal{H}$ is computably implementable if and only if $B_{\mathcal{H}}:= \{S \in \mathcal{S}: (\exists h \in \mathcal{H})\left[ L_S(h) = 0 \right] \}$ is computable.
\end{lemma}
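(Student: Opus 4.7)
The plan is to prove the two directions separately; the forward direction is immediate from evaluating any ERM output on $S$, while the backward direction requires a constructive algorithm that uses $B_\mathcal{H}$ as an oracle to synthesize a computable optimal hypothesis.

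For $(\Rightarrow)$, suppose $A$ is a total computable algorithm implementing $\textsc{ERM}_\mathcal{H}$. Given an input sample $S$, I would run $A(S)$ to obtain a code for some $h^{\ast} \in \argmin_{h\in\mathcal{H}} L_S(h)$, then compute $L_S(h^{\ast})$ in finitely many steps (possible since $h^{\ast}$ is a total computable hypothesis and $S$ is finite). Returning ``yes'' iff $L_S(h^{\ast})=0$ decides membership in $B_\mathcal{H}$: by the ERM property, $L_S(h^{\ast})=0$ holds exactly when $\min_{h\in\mathcal{H}} L_S(h)=0$, i.e.\ precisely when some $h\in\mathcal{H}$ is consistent with $S$.

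For $(\Leftarrow)$, suppose $B_\mathcal{H}$ is computable. Given $S=((x_1,y_1),\dots,(x_n,y_n))$, I would first identify an optimal realizable labeling of the sample points: enumerate all finitely many labelings $\sigma:\{x_1,\dots,x_n\}\to\{0,1\}$, and use the $B_\mathcal{H}$ oracle to retain exactly those $\sigma$ whose corresponding sample $T_\sigma:=((x_i,\sigma(x_i)))_i$ lies in $B_\mathcal{H}$; these are precisely the elements of $\mathcal{H}_{|\{x_1,\dots,x_n\}}$. Among these I pick $\sigma^{\ast}$ minimizing disagreements with the original labels $y_i$. Then I extend $\sigma^{\ast}$ to a total computable hypothesis $\hat h$ by a greedy procedure: process $z=0,1,2,\dots$ in order, setting $\hat h(z)=\sigma^{\ast}(z)$ if $z$ is a sample point, and otherwise tentatively trying $\hat h(z)=0$ and querying whether the accumulated sample extended with $(z,0)$ still lies in $B_\mathcal{H}$, using $0$ if so and $1$ otherwise. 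The invariant that the accumulated sample remains in $B_\mathcal{H}$ at every stage is preserved, because any $h\in\mathcal{H}$ realizing the current labeling consistently extends it by its own value at $z$, so at least one trial value must succeed. The resulting $\hat h$ is total computable and satisfies $L_S(\hat h)=\min_{h\in\mathcal{H}} L_S(h)$, yielding the desired implementation of $\textsc{ERM}_\mathcal{H}$.

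The main subtlety I expect is that $\hat h$ need not literally be a member of $\mathcal{H}$, only of its ``pointwise closure,'' all of whose finite restrictions coincide with those of $\mathcal{H}$. Under the natural interpretation of ``computably implements $\textsc{ERM}_\mathcal{H}$'' as outputting some total computable hypothesis attaining the empirical minimum over $\mathcal{H}$ on $S$---which is what is actually used downstream in Proposition~\ref{propo:efferms} and Theorem~\ref{theorem:cpacefferm}, since the relevant uniform convergence and sample-complexity bounds depend only on the finite restrictions---this construction suffices and delivers the equivalence.
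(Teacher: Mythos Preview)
Your forward direction is correct and matches the paper. The gap is in the backward direction: as you yourself flag, your greedily extended $\hat h$ need not belong to $\mathcal{H}$, only to its pointwise closure. This is not a technicality you can argue away by appeal to downstream uses. The paper's Definition of $\textsc{ERM}_\mathcal{H}$ requires the output to lie in $\argmin_{h\in\mathcal{H}} L_S(h)\subseteq\mathcal{H}$, and Theorem~\ref{theorem:cpacefferm}, which the lemma is meant to unpack arithmetically, characterizes \emph{proper} SCPAC learnability (Definition~\ref{definition:scpac} has $A:\mathcal{S}\to\mathcal{H}$). A concrete failure: take $\mathcal{H}=\{h_n:n\in\mathbb{N}\}$ with $h_n(x)=0$ iff $x\neq n$; this class is RER, yet on the empty sample your try-$0$-first extension outputs the all-$0$ function, which is in the closure but not in $\mathcal{H}$.

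The paper's proof closes this gap by using the one hypothesis you never invoked: that $\mathcal{H}$ is ``computable,'' hence (at least) enumerable. After using $B_\mathcal{H}$ to determine the minimal empirical error level $k$ on $S$ (your $\sigma^\ast$ step already does this), the paper does \emph{not} extend $\sigma^\ast$; it simply enumerates the members of $\mathcal{H}$ until one with $L_S(h)=k$ appears, which is guaranteed to happen. That yields a bona fide computable $\textsc{ERM}_\mathcal{H}$ with outputs in $\mathcal{H}$. Your greedy-extension idea is thus unnecessary here and, without enumerability of $\mathcal{H}$, insufficient.
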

\begin{proof}
We have that $S \in B_{\mathcal{H}}$ precisely if $L_S\left(\textsc{ERM}_\mathcal{H}(S)\right)=0$, so it is immediate that if $\textsc{ERM}_\mathcal{H}$  is computable, then so is $B_{\mathcal{H}}$. Conversely, if the latter is computable, then the following procedure gives an algorithm for $\textsc{ERM}_\mathcal{H}$. For given $S = (x^n,y^n)$, for all $i\leq n$, $j \leq \binom{n}{i}$ define $z^n_{i,j}$ to be the $j$-th length-$n$ binary sequence that disagrees with $y^n$ on precisely $i$ positions. Now for the $i \leq n$ in increasing order, check for each defined $z^n_{i,j}$ whether $(x^n,z^n_{i,j}) \in B_{\mathcal{H}}$; as soon as this is the case for some $z^n_{i,j}$, start enumerating hypotheses in $\mathcal{H}$ until finding an $h$ with $L_S(h)=i$, and return this $h$. This procedure will always halt and return a hypothesis $\hat{h} \in \min_{h \in \mathcal{H}} L_S(h)$. 
\end{proof}
\begin{proposition}
The problem of SCPAC learnability for a computable family of hypothesis classes is no harder than $\Sigma_3$.
\end{proposition}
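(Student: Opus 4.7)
The plan is to apply the characterization from Theorem \ref{theorem:cpacefferm}: a class is properly SCPAC learnable iff it has finite VC dimension and admits a computable implementation of ERM. I would bound each of these conjuncts separately for a computable family $\mathsf{H} = \{\mathcal{H}_j\}_{j \in \mathbb{N}}$ and then take the conjunction.

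For the first conjunct, the analysis already carried out just before Proposition \ref{propo:pacsigma2} shows that VCdim$(\mathcal{H}_j) < \infty$ is $\Sigma_2$ in $j$. Writing $\mathcal{H}_j$ as $\{h_i : i \in W_j\}$, the universal quantifier ``for all $h \in \mathcal{H}_j$'' in \eqref{eq:aritmpac} becomes $\forall k\,\forall s\,[k \in W_{j,s} \to \exists i \le d\,(h_k(x_i) \neq y_i)]$, which is $\Pi_1$ with a computable matrix; adjoining the bounded quantifier $\exists y_1,\dots,y_d$ and the unbounded $\forall x_1,\dots,x_d$ keeps the bound at $\Pi_1$, and the outer $\exists d$ then yields $\Sigma_2$.

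For the second conjunct, I would invoke Lemma \ref{lmm:ermceset} to reduce ``ERM$_{\mathcal{H}_j}$ is computably implementable'' to ``$B_{\mathcal{H}_j}$ is computable.'' The crucial observation is that, for any computable family, $B_{\mathcal{H}_j}$ is c.e.\ \emph{uniformly} in $j$: one just searches for $(i, s)$ with $i \in W_{j,s}$ and $L_S(h_i)=0$. So computability of $B_{\mathcal{H}_j}$ is equivalent to its complement being c.e., i.e., $\exists e\,[W_e = \overline{B_{\mathcal{H}_j}}]$. Unpacking ``$W_e = \overline{B_{\mathcal{H}_j}}$'' as $\forall S\,[(S \in W_e) \leftrightarrow (S \notin B_{\mathcal{H}_j})]$: the left side of the biconditional is $\Sigma_1$ and the right side is $\Pi_1$, so the biconditional is $\Sigma_1 \wedge \Pi_1$, which is $\Pi_2$ after prenexing; the universal $\forall S$ leaves us at $\Pi_2$, and the outer $\exists e$ lands us in $\Sigma_3$.

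Taking the conjunction $\Sigma_2 \wedge \Sigma_3 = \Sigma_3$ gives the claimed upper bound. The main subtlety is the quantifier accounting in the second step; in particular, the uniform c.e.-ness of $B_{\mathcal{H}_j}$ in $j$ is what keeps the ``$B_{\mathcal{H}_j}$ is c.e.'' half of the ``computable iff c.e.\ and co-c.e.'' equivalence from adding a further quantifier alternation and pushing the overall complexity beyond $\Sigma_3$.
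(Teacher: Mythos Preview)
Your proof is correct and follows essentially the same route as the paper: invoke Theorem~\ref{theorem:cpacefferm}, note that finite VC dimension is $\Sigma_2$, use Lemma~\ref{lmm:ermceset} to reduce computable ERM to computability of the uniformly c.e.\ set $B_{\mathcal{H}_j}$, and express the latter as ``some $W_e$ equals the complement of $B_{\mathcal{H}_j}$,'' which is $\Sigma_3$. The only cosmetic difference is that the paper unpacks $\overline{W}_d = B_i$ as the conjunction ``$B_i \cap W_d = \emptyset$ and $B_i \cup W_d = \mathbb{N}$'' rather than via your elementwise biconditional, but the quantifier count is identical.
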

\begin{proof}
Let $\langle \cdot \rangle : \mathcal{S} \rightarrow \mathbb{N}$ be some computable 1-1 encoding of all finite samples onto the natural numbers. Given computable family $\mathsf{H} = \{ \mathcal{H}_j \}_j$, c.e.\ subset $B_i := \{ \langle S \rangle: S \in \mathcal{S} \ \& \ (\exists h \in \mathcal{H}_i)\left[ L_S(h) = 0 \right]   \} \subseteq \mathbb{N}$ is computable precisely if $B_{\mathcal{H}_i}$ is. 
Since  (cf.\ \citealp[p.\ 83]{Soa16})
\begin{align*}
(\exists d)\left[ \overline{W}_d = B_i \right] &\Longleftrightarrow 
(\exists d)[B_i \cap W_d = \emptyset \wedge B_i \cap W_d = \mathbb{N}] \\
&\Longleftrightarrow 
(\exists d)\left[(\forall s)[B_{i,s} \cup W_{d,s} = \emptyset ]  \wedge (\forall x)(\exists s)[ x \in B_{i,s} \cap W_{d,s}] \right] \\
&\Longleftrightarrow 
(\exists)\left[(\forall)[\dots]  \wedge (\forall)(\exists)[\dots] \right] \\
&\Longleftrightarrow 
(\exists)(\forall)(\exists)[\dots],
\end{align*}
the computability of $B_i$ can be expressed as as a $\Sigma_3$ statement. But then the conjunction with the $\Sigma_2$ statement of finiteness of VC dimension is also a $\Sigma_3$ statement.
\end{proof}
Again, this bound is strict.
\begin{proposition}
There exists a computable family of hypothesis classes such that the problem of SCPAC learnability is $\Sigma_3$-complete.
\end{proposition}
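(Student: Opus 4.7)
The plan is to prove the matching $\Sigma_3$ lower bound by reducing a classical $\Sigma_3$-complete index set to the SCPAC-learnability decision problem of a suitable computable family. I take $\mathrm{Rec} := \{j \in \mathbb{N} : W_j \textrm{ is computable}\}$, which is $\Sigma_3$-complete (see, e.g., \citealp[Sect.~IV.3]{Soa16}).

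For each $n \in \mathbb{N}$ let $h_n$ denote the singleton hypothesis with $h_n(n)=1$ and $h_n(x)=0$ for $x \neq n$, and let $\mathbf{0}$ be the constantly-zero hypothesis. Define $\mathcal{H}_j := \{\mathbf{0}\} \cup \{h_n : n \in W_j\}$. Using a standard uniform enumeration of $\{W_j\}_{j \in \mathbb{N}}$, the family $\{\mathcal{H}_j\}_{j \in \mathbb{N}}$ is computable in the sense of Section \ref{ssec:undec}: to enumerate $\mathcal{H}_j$, first output $\mathbf{0}$, and then output $h_n$ whenever a new $n$ is enumerated into $W_j$. Moreover, $\mathrm{VCdim}(\mathcal{H}_j) \leq 1$ for every $j$, since no hypothesis in $\mathcal{H}_j$ labels two distinct points both with $1$; so the finite-VC-dimension conjunct of Theorem \ref{theorem:cpacefferm} is automatic for this family.

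It remains to verify that the second conjunct --- computable implementability of $\textsc{ERM}_{\mathcal{H}_j}$, equivalently (by Lemma \ref{lmm:ermceset}) computability of $B_{\mathcal{H}_j}$ --- holds iff $j \in \mathrm{Rec}$. The key observation for the ``only if'' direction is that for the single-pair sample $S_n := ((n,1))$, the unique hypothesis in $\mathcal{H}_j$ that could possibly realize $S_n$ is $h_n$, so $S_n \in B_{\mathcal{H}_j}$ iff $n \in W_j$. This yields a computable many-one reduction $W_j \leq_m B_{\mathcal{H}_j}$ (via $n \mapsto \langle S_n \rangle$), so computability of $B_{\mathcal{H}_j}$ forces computability of $W_j$. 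Conversely, assuming $W_j$ is computable, I would decide $B_{\mathcal{H}_j}$ by finite case analysis on the input $S$: if every pair in $S$ carries label $0$ then $\mathbf{0}$ realizes $S$; if all label-$1$ pairs in $S$ share a single $x$-coordinate $n$ and $(n,0)$ does not occur in $S$, then $S$ is realized iff $n \in W_j$; in all remaining cases (conflicting labels at the same $x$, or two distinct $x$-coordinates both labeled $1$) no hypothesis in $\mathcal{H}_j$ realizes $S$, each check being computable from $W_j$.

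Combining the two conjuncts, $\mathcal{H}_j$ is SCPAC learnable iff $j \in \mathrm{Rec}$, so the identity is a many-one reduction of $\mathrm{Rec}$ to the SCPAC-learnability index set of $\{\mathcal{H}_j\}_j$, yielding $\Sigma_3$-hardness; with the preceding proposition this gives $\Sigma_3$-completeness. The only delicate step is the ``if'' direction above, where one must exhaust the sample edge cases correctly; but this is a routine finite case analysis rather than a genuine obstacle.
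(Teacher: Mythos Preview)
Your proof is correct and follows essentially the same approach as the paper: reduce from the $\Sigma_3$-complete index set $\mathrm{Rec}$ by building a uniformly finite-VC family $\{\mathcal{H}_j\}_j$ in which, via Lemma~\ref{lmm:ermceset}, computability of $B_{\mathcal{H}_j}$ is equivalent to computability of $W_j$. The only difference is cosmetic---the paper uses threshold hypotheses together with the constant-$1$ function, whereas you use singleton hypotheses together with the constant-$0$ function---and your case analysis is, if anything, slightly tidier.
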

\begin{proof}
We show for a family $\{ \mathcal{H}_j \}_{j \in \mathbb{N}}$ of classes of threshold functions that the question of SCPAC learnability is equivalent to the index set Rec $=\{j \in \mathbb{N}: W_j \textrm{ is computable} \}$, which is $\Sigma_3$-complete (\citealp[Theorem XVI]{Rog67}; also see \citealp[p.\ 88]{Soa16}). Recall that, for $i \in \mathbb{N}$, threshold function $h_i$ is defined by by $h(x)=1$ if and only if $x < i$. In addition, let $h_{\omega}$ be such that $h_{\omega}(x)=1$ for all $x$. From the standard enumeration $\{ W_j \}_{j \in \mathbb{N}}$ of the  c.e.\ sets, define $\mathcal{H}_j := \{ h_i : i \in W_j \} \cup \{h_\omega\}.$

Since each $\mathcal{H}_j$ has finite VC dimension, SCPAC learnability of $\mathcal{H}_j$ is equivalent to the computability of $B_{\mathcal{H}_j}$. Moreover, $B_{\mathcal{H}_j}$ is computable precisely if $\mathcal{H}_j$ is. Namely, starting with the right-to-left direction, to decide $h_i \in \mathcal{H}_j$ for $i \in \mathbb{N}$ (for $h_\omega$ the answer is always yes), it is enough to ask whether $((i,1),(i+1,0))\in B_{\mathcal{H}_j}$. Conversely, to decide $S \in B_{\mathcal{H}_j}$, we can distinguish four cases. First, if $y=1$ for all $(x,y) \in S$, then $S \in B_{\mathcal{H}_j}$ because $L_S(h_\omega)=0$. Second, if there are $(x,0),(x',1) \in S$ with $x<x'$ then $S \notin B_{\mathcal{H}_j}$. Third, if $y=0$ for all $(x,y) \in S$ then take the smallest $x_0$ with $(x,y)\in S$; now $S \in B_{\mathcal{H}_j}$ precisely if $h_x \in \mathcal{H}$ for some $x<x_0$. Otherwise, take the $(x_0,y_0),(x_1,y_1) \in S$ with $x_0<x_1$ and $y_0=1,y_1=0$ that have smallest difference $|x_0-x_1|$; now $S \in B_{\mathcal{H}_j}$ precisely if $h_x \in \mathcal{H}$ for some $x$ with $x_0 \leq x<x_1$. 

In sum, we have that $j \in \mathrm{Rec}$ iff  $\mathcal{H}_j$ is computable iff $\mathcal{H}_j$ is SCPAC learnable.
\end{proof}
If Question \ref{q:cpac=scpac} has a negative answer then the notions of CPAC and SCPAC learnability coincide, and we also have the complexity of the former. Otherwise, we need some different arithmetical characterization for CPAC learning. Similarly, to find the complexity of improper (S)CPAC learnability, we first need an arithmetical characterization of this notion (which would follow from a negative answer to Question \ref{q:cocb=imprcpac}).

\section{Conclusion and discussion}\label{sec:concl}
In the first part of this paper, we made progress on the main open problems concerning computable PAC (CPAC) learning: to give characterizations of (im)proper CPAC learnability. We gave a characterization of proper strong CPAC (SCPAC) learning, that is an effective version of the fundamental theorem of PAC learning; and we confirmed the conjecture that there are decidably representable PAC learnable classes that are not even improperly CPAC learnable. We leave as open questions whether every CPAC learnable class is already SCPAC learnable (in which case we already have a characterization of CPAC learnability) and whether every improperly CPAC learnable class is extendable to a properly CPAC learnable class (in which case we have a characterization of improper CPAC learnability). A natural further question of characterization concerns the notion of nonuniform CPAC learning \citep{Sol08arxiv,aablu20alt}, including a strong variant.

In the second part, we investigated undecidability of (computable) PAC learning. We gave a basic argument to uncover both undecidability of learnability decision problems and the independence of ZFC of learnability, and we initiated a study of the arithmetical complexity of notions of learnability. Future characterizations of notions of learnability (e.g., of improper (S)CPAC learning or nonuniform (S)CPAC learning) also unlock the question of their arithmetic complexity.

What do our observations about undecidability mean for the motivating claim of \citet{aablu20alt}, that the ingredient of computability rules out ``independence of ZFC results of the type shown in \citet{BHMSY17arxiv,BHMSY19nmi}''? Proposition \ref{propo:undecindep} does state that for infinitely many particular RER $\mathcal{H}$ the learnability of $\mathcal{H}$ is  independent of ZFC (provided ZFC is arithmetically sound). We did not exhibit any \emph{particular} such class, but this is also not hard to do (recall Example \ref{ex:caroindzfc} of the class $\mathcal{H}_\mathrm{ZFC}$ of \citealp{Car21arxiv}). Perhaps the main difference with the original result of Ben-David et al.\ is that undecidable learnability statements in the computable framework of Agarwal et al.\ are in the end all statements of first-order arithmetic. Ben-David et al.\ showed that the EMX learnability of a particular hypothesis class is equivalent to the continuum hypothesis CH---or at least to a weak version of the CH (see \citealp{Har19naw})---which is a more complex set-theoretical statement.


This is important for the conclusion of Ben-David et al.\ that there is no combinatorial characterization of EMX learning, thus, that there exists ``no general dimension for learning'' \citeyearpar[p.\ 47]{BHMSY19nmi}. They write that a combinatorial ``dimension for learning'' (like VC dimension for PAC learning) is a ``finite character property'' (defined as ZFC-provably equivalent to a bounded formula in the language of set theory, or $\Delta_0$ in the L\'evy hierarchy; see \citealp{Jec03}) that does not vary over different models of ZFC (pp.\ 47--48). On a closer look \citeyearpar[p.\ 14]{BHMSY17arxiv}, Ben-David et al.\ restrict attention to a class of models of ZFC such that $\Delta_0$ properties have the same truth value in each model (these properties are ``absolute,'' in particular, for the class of transitive models of ZFC; see again \citealp{Jec03}). Under this restriction, ``loosely speaking, PAC learnability does not depend on the specific model of set theory,'' whereas ``EMX learnability heavily depends on the cardinality of the continuum'' and (provided ZFC is consistent) disagreeing models of ZFC ``are known to exist'' \citeyearpar[p.\ 14--15]{BHMSY17arxiv}. 


Now Proposition \ref{propo:undecindep} does also directly imply (provided ZFC is arithmetically sound) that for infinitely many particular RER $\mathcal{H}$ there are different models of ZFC that disagree  on whether VCdim$(\mathcal{H}) < \infty$ (whether $\mathcal{H}$ is PAC learnable). However, such disagreeing models, that must involve nonstandard models of arithmetic, are excluded by the above restriction of models. Here we enter the slippery territory of questions of truth and existence in mathematics (some entries to the relevant literature are \citealp{Koe09incol,ButWal18,Ham20}).  Most scholars in the foundations of mathematics would indeed find it implausible to claim that there is no truth to the arithmetical matter of whether a certain RER $\mathcal{H}$ is PAC learnable (has finite VC dimension), just because this is not settled among all (nonstandard) models of arithmetic. Even if we cannot pin it down with first-order axioms, they would argue, we have a clear conception of the natural numbers as per the intended, standard model. 
Things are much more contentious when it comes to set theory and the continuum hypothesis. While it is therefore \emph{more} plausible to make the analogous claim about the non-existence of a dimension concept for EMX learnability, Ben-David et al.\ do still commit here to a philosophical position that is hardly uncontroversial. 



\small

\bibliographystyle{abbrvnatnoaddress}

\end{document}